\documentclass{article}

\usepackage{float}
\usepackage{fullpage, multicol}
\date{}
\usepackage{natbib}

\usepackage[utf8]{inputenc}
\usepackage[T1]{fontenc}

\usepackage{booktabs}
\usepackage{amsfonts}
\usepackage{nicefrac}
\usepackage{microtype}
\usepackage{xcolor}
\usepackage{enumitem}
\usepackage{subcaption}
\usepackage[many]{tcolorbox}
\usepackage{tabularx}
\usepackage{wrapfig}
\usepackage{algorithm}
\usepackage{algpseudocode}
\usepackage{amsmath,amsthm,amssymb,bbm}
\usepackage{mathtools}
\usepackage{thmtools, thm-restate}
\usepackage{float}

\usepackage{url}
\usepackage{hyperref}

\usepackage[dvipsnames]{xcolor}

\usepackage{algorithm}
\usepackage{algpseudocode}
\usepackage{amsmath,amsthm,amssymb,bbm}
\usepackage{mathtools}
\usepackage{thmtools, thm-restate}
\allowdisplaybreaks

\DeclareDocumentCommand \norm { o m }{{\lVert #2 \rVert_#1}}

\newcommand{\mdp}{\mathcal{M}}
\newcommand{\states}{\mathcal{S}}
\newcommand{\actions}{\mathcal{A}}
\newcommand{\rew}{R}
\newcommand{\transitions}{P}

\DeclareFontFamily{OT1}{pzc}{}
\DeclareFontShape{OT1}{pzc}{m}{it}{<-> s * [1.10] pzcmi7t}{}
\DeclareMathAlphabet{\pzccal}{OT1}{pzc}{m}{it}
\newcommand{\algo}{\pzccal{A}}

\theoremstyle{plain}
\newtheorem{theorem}{Theorem}[section]
\newtheorem*{theorem*}{Theorem}

\theoremstyle{definition}
\newtheorem{definition}[theorem]{Definition}

\theoremstyle{remark}

\title{Behavior-Consistent Deep Reinforcement Learning}

\author{
  Marcel Hussing\thanks{Equal contribution} \\
  University of Pennsylvania \\
  \texttt{mhussing@seas.upenn.edu} \\
  \and
  Liv d'Aliberti\footnotemark[1] \\
  Princeton University \\
  \texttt{od2961@princeton.edu} \\
  \and
  Claas Voelcker \\
  University of Texas at Austin \\
  \and
  Ben Eysenbach \\
  Princeton University \\
  \and
  Eric Eaton \\
  University of Pennsylvania \\
}

\begin{document}

\maketitle

\begin{abstract}
    Reinforcement learning (RL) often exhibits high variance across training runs, leading to unreliable performance and posing a major challenge to deployment in real-world domains. 
    In this work, we address the challenge of cross-run policy divergence by formalizing the problem of {\em behavior-consistent RL}, where the objective is to obtain policies that are both high-performing and distributionally similar across training runs. 
    Our key observation is that maximum-entropy RL provides a direct mechanism for controlling behavioral divergence by anchoring runs to a common (uniform) prior. 
    We prove that, for Boltzmann policies, choosing the temperature proportional to $Q$-function disagreement bounds the pairwise KL divergence between the induced policies. 
    However, we also show that na\"{\i}vely increasing entropy might impair policy optimization while amplifying off-policy error.
    Building upon these observations, we propose $Q$-value Expectile Disagreement (QED), a state-dependent temperature schedule that uses double-critic disagreement as a single-run proxy for cross-run disagreement. 
    Empirically, we demonstrate that across 18 continuous-control tasks, QED reduces across-run divergence by two orders of magnitude without sacrificing performance, resulting in a considerable reduction in return variance at modest sample-efficiency costs. 
\end{abstract}

\section{Introduction}
\label{sec:introduction}

Reinforcement learning (RL) is a powerful tool for solving control problems. Yet, the lack of consistency across independent training runs remains a persistent and under-addressed challenge~\citep{colas2018how, henderson2018matters, paleu2023reproducibility}. 
Small changes in random initialization, data ordering, or environment stochasticity can yield policies that differ dramatically in performance~\citep{islam2017reproducibility, henderson2018matters, bjorck2022is}. 
Even when final performance returns appear similar, learned policies can exhibit different behaviors~\citep{clary2018variability, flageat2024beyond, eaton2023replicable, eaton2026replicable}, leading to variations in robustness and failure modes. 

This variability poses a fundamental obstacle to both scientific progress~\citep{henderson2018matters} and practical deployment~\citep{lynnerup2020a, cavenaghi2023a}. 
From a scientific perspective, high variance across runs complicates evaluation and often requires averaging over numerous random seeds to obtain statistically significant comparisons~\citep{colas2018how, agarwal2021deep}. 
Stochasticity can even cause performance to vary for a fixed policy~\citep{flageat2024beyond}. 
In practice, the consequences are more severe. 
When policies trained under identical conditions yield qualitatively different behavior, iterative processes such as reward design~\citep{booth2023theperils}, debugging, and safety validation become unreliable, as apparent improvements may reflect stochastic idiosyncrasies rather than the modification.  
For example, in deployment settings such as large-language-model fine-tuning, policies trained on such rewards might cause regular updates to feel qualitatively different to users in terms of helpfulness, tone, or safety. 

While prior work has studied RL instability through statistical metrics such as return variance~\citep{colas2018how, agarwal2021deep}, behavioral divergence across runs has received comparatively little algorithmic attention (see
Section~\ref{sec:related}
for related work).
Existing algorithmic interventions that reduce variability across RL training runs or during learning remain sparse~\citep{bjorck2022is, tang2024improving}.
Related theoretical work formalizes replicability as requiring identical outputs across executions~\citep{eaton2023replicable, eaton2026replicable}.
\citet{eaton2026replicable} also provide an empirical method inspired by their theory that obtains increased cross-execution policy agreement in a small set of Atari experiments. We go beyond this by studying how behavioral consistency can be achieved under the practical requirements of any deep RL setting, including those of continuous control.

In many applications, from robotics to conversational AI systems, the consistency of behavior is often as critical as its performance.  
In this work, we argue that {\bf behavioral consistency should be treated as a core objective in RL}. We formalize this perspective through the new framework of {\em behavior-consistent RL}, in which the goal is to learn policies that are both high-performing and distributionally similar across independent runs. This framing shifts the focus of RL from solely achieving high expected return to ensuring that learning outcomes are stable and replicable. 

\begin{figure}[t!]
  \centering

  \begin{minipage}[t]{0.49\textwidth}
    \centering
    \begin{tcolorbox}[
      colframe=BrickRed,
      colback=BrickRed!3,
      boxrule=0.8pt,
      sharp corners,
      width=\textwidth,
      boxsep=0pt,
      enhanced,
      left=1pt,
      right=1pt,
      top=4pt,
      bottom=0pt,
      equal height group=policyboxes,
      valign=top
    ]
      \centering
      \textbf{Standard Entropy Tuning}\\[-2pt]

      \includegraphics[width=\linewidth]{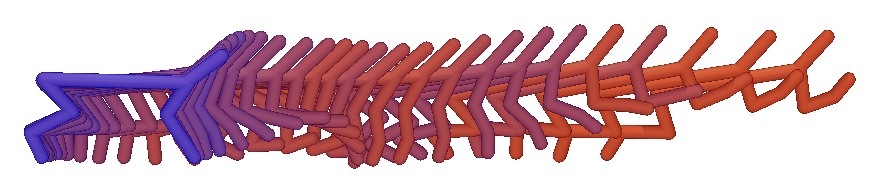}
      \includegraphics[width=\linewidth]{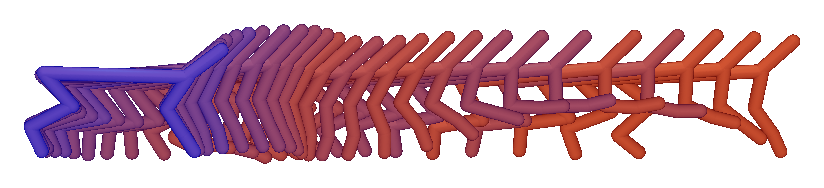}
      \includegraphics[width=\linewidth]{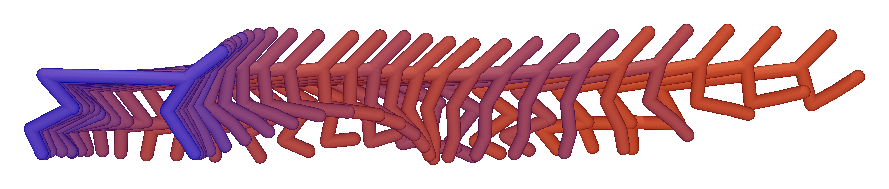}
    \end{tcolorbox}
  \end{minipage}
  \hfill
  \begin{minipage}[t]{0.49\textwidth}
    \centering
    \begin{tcolorbox}[
      colframe=Blue,
      colback=Blue!3,
      boxrule=0.8pt,
      sharp corners,
      width=\textwidth,
      boxsep=0pt,
      enhanced,
      left=1pt,
      right=1pt,
      top=4pt,
      bottom=0pt,
      equal height group=policyboxes,
      valign=top
    ]
      \centering
      \textbf{$Q$-value Expectile Disagreement (Ours)}\\[-2pt]

      \includegraphics[width=\linewidth]{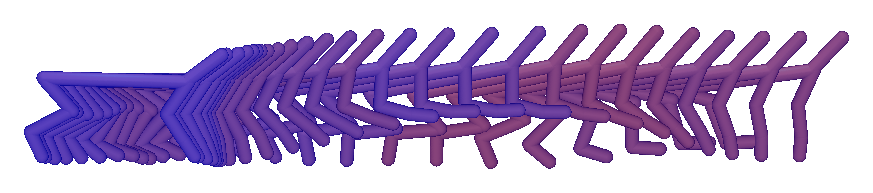}
      \includegraphics[width=\linewidth]{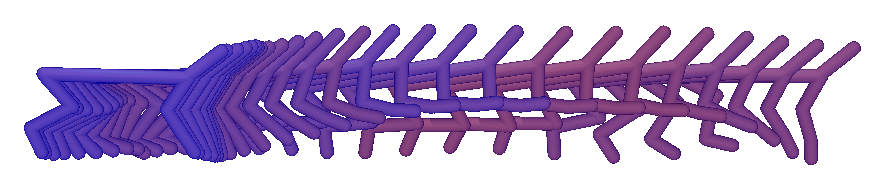}
      \includegraphics[width=\linewidth]{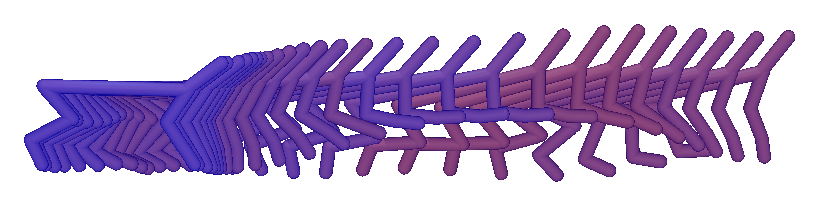}
    \end{tcolorbox}
  \end{minipage}

  \caption{\textbf{QED makes independently trained policies visibly behavior-consistent.} Visualization of policies from three different training runs on the $\texttt{cheetah\_run}$ task, comparing (left) traditional entropy autotuning~\citep{haarnoja2018applications} against (right) our approach (QED). Color shade denotes the mean pairwise $L_2$ distance between state vectors at each timestep: blue is low, red is high. 
  }
  \label{fig:visualization}
\end{figure}

Toward this goal, our key insight is that maximum entropy (MaxEnt) RL provides a natural mechanism for inducing behavioral consistency. 
By regularizing policies toward a common prior~\citep{ziebart2008maximum,levine2018reinforcement}, entropy shrinks the policy space and restricts the range of behaviors independent runs can realize. 
Yet, na\"{i}vely increasing entropy can amplify off-policy  error. These competing effects reveal a tradeoff: maximizing entropy promotes consistency but can degrade performance. To resolve this tradeoff, we establish a theoretical connection between policy divergence and $Q$-disagreement. We prove that, for Boltzmann policies, scaling the temperature in proportion to $Q$-disagreement bounds the divergence between the induced policies.
This suggests the following principled approach: entropy should be high when $Q$-estimates are uncertain and decrease as they converge. 

Based on this insight, we introduce $Q$-value Expectile Disagreement (QED), a practical algorithm for adaptive entropy tuning in RL. 
QED increases temperature early to achieve consistency and cools down over time for convergence.
We evaluate QED on top of two strong base algorithms and show that it reduces pairwise policy divergence across independent runs by up to two orders of magnitude while maintaining competitive performance, producing quantitatively and qualitatively more similar behavior, and reducing return variance by around $50$\%. 
These results show that controlling behavioral consistency in RL is both feasible and beneficial, offering a new perspective on stability and replicability in RL. \textcolor{Blue}{\textbf{Our key contributions include: }}
\vspace{-5pt}
\begin{enumerate}[
    leftmargin=14pt,
    noitemsep,
    nolistsep,
    topsep=0pt,
    parsep=0pt,
    label=\textcolor{Blue}{\textbf{\arabic*.}}
]
    \item We introduce the setting of \emph{behavior-consistent reinforcement learning} (BRL).
    \item We prove a relationship between policy divergence and critic disagreement in MaxEnt RL.
    \item We illustrate the off-policy challenges in high-entropy MaxEnt RL.
    \item We introduce \textsc{QED}, a method that increases behavioral similarity and reduces return variance.
\end{enumerate}

\section{Preliminaries}
\label{sec:background}

We consider the RL problem~\citep{sutton2018introduction} of finding an optimal policy in a Markov decision process (MDP)~\citep{puterman1994markov} \( \mdp = (\states,\actions, \transitions, \rew,\gamma)\), with states $\states$, actions $\actions$, transition function \(\transitions:  \states \times \actions \to \Delta(\states\)),  reward function \(\rew: \states\times\actions\to\mathbb R\), and discount factor \(\gamma\in(0,1)\). In general, this paper focuses on continuous state-action spaces but for ease of exposition we will state all definitions and results with discrete action spaces. 

A policy \(\pi:\states\to\Delta(\actions)\) attains \( J(\pi)
=\mathbb E_{\tau\sim\pi}[\sum_{t=0}^\infty\gamma^t\,r(s_t,a_t)],
\) and the optimal policy is \(\pi^\star=\arg\max_\pi J(\pi)\).  
In practice, one learns the action-value function  
\[
Q^\pi(s,a)
=\mathbb E_{\tau\sim\pi}\Bigl[\sum_{t=0}^\infty\gamma^t\,r(s_t,a_t)\,\bigm|\,s_0=s,a_0=a\Bigr]
\]
and acquires $\pi$ via approximate policy improvement with respect to $Q^\pi$, noting that the $Q^\pi$-greedy policy $\pi_{\mathrm{greedy}}(s)\in\arg\max_{a\in\actions} Q^\pi(s,a)$ is optimal when $Q^\pi=Q^\star$. 

\paragraph{KL-Constrained RL.}
KL-regularized RL~\citep{todorov2009efficient, peters2010relative} provides a principled surrogate for this idea by constraining each policy improvement step relative to a known \emph{reference} (or \emph{prior}) policy density $\pi_0(\cdot\mid s)$.
A standard formulation is the KL-constrained objective
\begin{equation*}
    \max_{\pi}\;\;
    \mathbb E_{\tau\sim\pi}\Bigl[\sum_{t=0}^\infty \gamma^t\Bigl(
    r(s_t,a_t)\;-\;\alpha\,D_{\mathrm{KL}}\!\bigl(\pi(\cdot\mid s_t)\,\|\,\pi_0(\cdot\mid s_t)\bigr)\Bigr)\Bigr],
\end{equation*}
with $\alpha>0$ controlling the strength of the constraint.
This induces a prior-weighted Boltzmann policy: for each state $s$, the optimal action probabilities are proportional to the reference density
\begin{equation*}
    \pi^\star(a\mid s)
    =
    \frac{\pi_0(a\mid s)\exp(Q^\star(s,a)/\alpha)}
    {\sum_{a' \in \actions}\pi_0(a'\mid s)\exp(Q^\star(s,a')/\alpha)\ } \enspace.
\end{equation*}

\paragraph{Maximum Entropy RL.}
In this paper, we work with the maximum entropy (MaxEnt)~\citep{ziebart2008maximum, haarnoja2018sac} framework as an instantiation of KL-regularized RL~\citep{levine2018reinforcement}, obtained by choosing a state-independent base density $\pi_0$ on $\actions$, so that the KL penalty differs from the entropy penalty $-\!H(\pi(\cdot\mid s))$ only by an additive constant.
The resulting objective is
\begin{equation*}
    \pi^\star_{\rm ME}
    =\arg\max_\pi
    \;\mathbb E_{\tau\sim\pi}
    \Bigl[\sum_{t=0}^\infty\gamma^t\bigl(r(s_t,a_t)+\alpha\,H(\pi(\cdot\!\mid\!s_t))\bigr)\Bigr],
\end{equation*}
where $H(\pi(\cdot \mid s)) = -\mathbb E_{a\sim\pi(\cdot\mid s)} \bigl[\log\pi(a \mid s)\bigr]$ is the differential entropy, and the optimal policy has the Boltzmann form $\pi^\star(a\mid s)\propto \exp(Q^\star(s,a)/\alpha)$. Informally, larger \(\alpha\) yields a more uniform action distribution, while smaller \(\alpha\) concentrates probability on high-value actions.
When $\alpha$ is fixed to a specific value $C$, we will simply denote it $\alpha_{C}$.
More traditionally, $\alpha$ is tuned such that the action distribution approaches a fixed target entropy $H_{\text{target}}$~\citep{haarnoja2018sac} (often set to the negative of the dimension of the action space) by continually fitting $\alpha$ to find
\begin{equation} \label{eq:sac_alpha_loss}
    \alpha_{\rm TE}
    \in
    \arg\min_{\alpha\in\mathbb R_{>0}}
    \mathbb E_{s\sim\mathcal D,\;a\sim\pi(\cdot\mid s)}
    \Bigl[
    \alpha\bigl(-\log \pi(a\mid s)- H_{\text{target}} \bigr)
    \Bigr].
\end{equation}

\section{Behavior-consistent RL}

\label{sec:behavior_consistent_rl}

Moving beyond the standard RL problem---and towards empirical algorithms that make reward tuning, experimental replication, and debugging more reliable---we study differences between independent executions of the same algorithm in a fixed MDP, which we refer to as \emph{runs}. To capture behavioral similarity between the policies produced by these runs, we introduce the setting of \emph{behavior-consistent RL} (BRL). In BRL, the objective is not only to maximize return but also to obtain policies that are distributionally similar across runs, so that an algorithm is both performant and produces policies that exhibit the same behavior. 

A common way to measure the similarity between the two probability distributions---or, in our case, policies---is via a divergence $D$. Consider the disagreement between any two policies 
$\mathbb{E}_{s\sim \mu_{\mathrm{ref}}}
\left[
    D
    \left(
        \pi^{(i)}(\cdot\mid s),
        \pi^{(j)}(\cdot\mid s)
    \right)
\right]$,
where $\mu_{\mathrm{ref}}$ is a reference distribution. 
To turn agreement into a measure of behavioral stability, one can tie the reference distribution $\mu_{\mathrm{ref}}$ directly to the output of the algorithm.
A sensible choice for it is the population state distribution induced by the learned policies themselves,
    $\mu_{\mathrm{ref}}
    =
    \mathbb{E}_{\pi \sim \algo}
    \left[
        d^{\pi}
    \right],$
where $\pi \sim \algo$ denotes the policy produced by a random execution of algorithm $\algo$, and $d^{\pi}$ denotes the state occupancy distribution induced by executing $\pi$ in the MDP.
Then, to measure behavioral similarity, the following criterion can be defined. 

\begin{definition}[Behavior-consistency in RL]
\label{def:behavior-consistent}
    Let $\algo$ be an RL algorithm that outputs a policy $\pi \in \Pi$ when trained on a fixed MDP.
    Let $\pi \sim \algo$ denote the random policy obtained from one execution of $\algo$, and let $d^{\pi}$ denote the state occupancy distribution induced by executing $\pi$ in the MDP.
    Define the behavioral reference distribution as
        $\mu_{\pi}
        \doteq
        \mathbb{E}_{\pi \sim \algo}
        \left[
            d^{\pi}
        \right].$
    Let $D:\Delta(\actions)\times\Delta(\actions)\to\mathbb{R}_{\geq 0}$ be a non-negative divergence between action distributions.
    Let $\pi^{(i)},\pi^{(j)}\sim\algo$ denote two policies produced by independent executions of $\algo$.
    The behavioral consistency of $\algo$ is defined as
    \begin{equation}
        \mathcal{V}(\algo)
        \doteq
        \mathbb{E}_{\pi^{(i)},\pi^{(j)}\sim\algo}
        \left[
        \mathbb{E}_{s\sim\mu_{\pi}}
        \left[
            D
            \left(
                \pi^{(i)}(\cdot\mid s),
                \pi^{(j)}(\cdot\mid s)
            \right)
        \right]
        \right].
    \end{equation}
\end{definition}
This notion captures the requirement that all policies produced by the same algorithm be close only on the space said policies will visit. This is sufficient to attain that lower inter-run variability means that different runs produce more behaviorally consistent policies. 
We note two things. 
First, the measure is directly dependent on the policies produced by the algorithm and thus two algorithms might measure similarity on different distributions. 
Second, behavior-consistency should always be tied to a secondary criterion such as return maximization as otherwise the trivial solution of outputting a constant policy will minimize the requirement.

\section{High-temperature maximum entropy RL} \label{sec:high_entropy}

If the policy of another run were available, we could directly constrain the updates in our current run toward it using KL-constrained RL and achieve behavioral similarity.
Yet, in practice, the policies obtained from other runs may not be available, either because obtaining them might require additional expensive training or because they are produced independently by other users.
Instead, we use a fixed, common reference policy \(\pi_0\) as an independent anchor toward which all runs can be regularized.
If each policy remains close to this shared reference, then behavior consistency can be controlled without access to other runs' policies; all policies remain close to the reference and therefore to each other. 
The MaxEnt RL framework offers a natural common prior: the uniform policy. 

We now prove that the Boltzmann temperature, and therefore the strength of the regularization, directly controls the difference between policies.
With an appropriate temperature, the soft Bellman operator produces policies across runs that never differ by more than some constant $\kappa$ even if their $Q$-values are not initialized identically. 
Intuitively, the only way two Boltzmann policies can substantially differ at a fixed state is if the corresponding $Q$-vectors differ relative to the temperature. 
When this happens, increasing the entropy regularization reduces the softmax sensitivity to pointwise $Q$-value discrepancies, preventing the resulting action probabilities from separating too sharply.
Setting $\alpha(s)$ proportional to the maximum pointwise disagreement makes these discrepancies small in temperature units, which keeps the two policies close in KL.
The following theorem formalizes this.

\begin{theorem}[Pairwise KL control via disagreement-scaled temperature] \label{thm:pairwise-kl}
    Assume $\actions$ is finite and fix a state $s\in\states$.
    Let $Q^{(1)}(s,\cdot),Q^{(2)}(s,\cdot)\in\mathbb R^{|\actions|}$ be two action-value vectors, and fix $\kappa>0$ and $\alpha_{\min} > 0$.
    Define the shared temperature
    \begin{equation} \label{eq:disagreement_alpha}
    \alpha(s) \doteq \max\left\{\alpha_{\min}, \frac{\|Q^{(1)}(s,\cdot)-Q^{(2)}(s,\cdot)\|_\infty}{\kappa}\right\}.
    \end{equation}
    Let $\pi^{(1)}(\cdot\mid s)$ and $\pi^{(2)}(\cdot\mid s)$ be the corresponding Boltzmann policies at temperature $\alpha(s)$. Then
    \begin{equation*}
    D_{\mathrm{KL}}(\pi^{(1)}(\cdot\mid s)\|\pi^{(2)}(\cdot\mid s))\le 2\kappa \enspace.
    \end{equation*}
\end{theorem}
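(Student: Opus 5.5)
The plan is a direct computation of the KL divergence between two softmax distributions at the same temperature. Write $\alpha=\alpha(s)$, $q_i(a)=Q^{(i)}(s,a)/\alpha$, and $\delta(a)=q_1(a)-q_2(a)$. Let $Z_i=\sum_{a}\exp(q_i(a))$, so that $\pi^{(i)}(a\mid s)=\exp(q_i(a))/Z_i$. Then
\begin{equation*}
D_{\mathrm{KL}}(\pi^{(1)}\|\pi^{(2)})=\mathbb E_{a\sim\pi^{(1)}}\bigl[\delta(a)\bigr]-\log\frac{Z_1}{Z_2}.
\end{equation*}
The key quantity is $\epsilon\doteq\|\delta\|_\infty=\|Q^{(1)}(s,\cdot)-Q^{(2)}(s,\cdot)\|_\infty/\alpha$. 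The choice of temperature in~\eqref{eq:disagreement_alpha} gives $\alpha\ge \|Q^{(1)}-Q^{(2)}\|_\infty/\kappa$, and hence $\epsilon\le\kappa$. If the disagreement is zero, then $\epsilon=0$ and the policies coincide; the role of $\alpha_{\min}$ is only to keep $\alpha$ positive in that case.

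The argument then bounds the two terms separately.
\begin{itemize}
\item The first term satisfies $\mathbb E_{\pi^{(1)}}[\delta]\le\epsilon$ trivially.
\item For the log-partition ratio, write $Z_1/Z_2=\mathbb E_{a\sim\pi^{(2)}}[\exp(\delta(a))]\ge \exp(-\epsilon)$, so $-\log(Z_1/Z_2)\le\epsilon$.
\end{itemize}
Summing gives $D_{\mathrm{KL}}\le 2\epsilon\le 2\kappa$, which is the claim. Equivalently, this is the standard fact that the log-ratio $\log\pi^{(1)}(a)/\pi^{(2)}(a)=\delta(a)-\log(Z_1/Z_2)$ is bounded in absolute value by $2\epsilon$ pointwise, because the softmax map is $2$-Lipschitz from $\ell_\infty$ into log-probabilities. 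A KL divergence is an expectation of this log-ratio, so it inherits the same bound.

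There is no real obstacle; the only care needed is bookkeeping. One must confirm that the same temperature $\alpha(s)$ is used for both policies, since that is what makes the cancellation work. One must also make sure the bound on $\epsilon$ uses the max in~\eqref{eq:disagreement_alpha} in the correct direction: a larger $\alpha$ only shrinks $\epsilon$. A sharper bound such as $\epsilon^2/2$ via Hoeffding's lemma is possible but not needed for the stated $2\kappa$.
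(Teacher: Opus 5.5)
Your proof is correct and matches the paper's argument: the same split of the KL into $\mathbb E_{\pi^{(1)}}[\delta]$ plus a log-partition ratio, with each piece bounded by $\|Q^{(1)}(s,\cdot)-Q^{(2)}(s,\cdot)\|_\infty/\alpha(s)$, and then $\epsilon\le\kappa$ from the choice of $\alpha(s)$. Your identity $Z_1/Z_2=\mathbb E_{\pi^{(2)}}[e^{\delta}]\ge e^{-\epsilon}$ is just the paper's termwise bound $Z^{(2)}\le Z^{(1)}e^{\epsilon}$ written as an expectation.
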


{\em Proof Sketch}: (Full proof in Appendix~\ref{app:kl_proof}) The high-level idea is to write the KL of two Boltzmann policies as an average of the log-ratio between them. This log-ratio has two pieces: how much the two $Q$-values differ on the chosen action, and how much their normalizing constants differ. The first piece is immediately bounded by the maximum $Q$-disagreement. The second piece is bounded by observing that every term in one normalizer is within a fixed factor of the corresponding term in the other normalizer. So both pieces are controlled by the maximum disagreement divided by the temperature. Since the temperature was chosen appropriately, the total KL is bounded. $\qed$

Next, we show that using the disagreement-scaled temperature can lead to a behaviorally-consistent optimal entropy-regularized policy. In fact, we recover the optimal soft-value iteration result as the disagreement-dependent error vanishes with convergence.

\begin{theorem}[Convergence under disagreement-scaled temperature]  \label{thm:disagreement_temperature_convergence}
Assume $\alpha_{\min}>0$. Let $Q^\star$ denote the optimal Q-function of the unregularized MDP. For a temperature $\alpha:\states\to\mathbb R_{>0}$, define
\begin{equation*}
(\mathcal T_\alpha Q)(s,a)
\doteq
r(s,a)
+
\gamma\mathbb E_{s'\sim P(\cdot\mid s,a)}
\left[
\alpha(s')\log\sum_{a'\in\actions}
\exp\left(\frac{Q(s',a')}{\alpha(s')}\right)
\right].
\end{equation*}
Consider the coupled iterates $Q_{t+1}^{(i)}=\mathcal T_{\alpha_t}Q_t^{(i)}$, $i\in\{1,2\}$, where $\alpha_t(s)$ is the coupling temperature from \eqref{eq:disagreement_alpha}. Let $\Delta_0\doteq \|Q_0^{(1)}-Q_0^{(2)}\|_\infty$ be the initial disagreement. Then, for $i\in\{1,2\}$ and $t\ge 0$,
\begin{equation*}
\|Q_t^{(i)}-Q^\star\|_\infty
\le
\gamma^t\|Q_0^{(i)}-Q^\star\|_\infty
+
\textcolor{BrickRed}{\frac{\gamma\alpha_{\min}\log|\actions|}{1-\gamma}(1-\gamma^t)}
+
\textcolor{Blue}{\frac{\Delta_0\log|\actions|}{\kappa}\,t\gamma^t}\enspace.
\end{equation*}
\end{theorem}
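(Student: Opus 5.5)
The plan is to compare each soft iterate with the hard Bellman optimality operator $\mathcal T$, for which $Q^\star=\mathcal T Q^\star$. I will also track how the coupled disagreement $\Delta_t\doteq\|Q_t^{(1)}-Q_t^{(2)}\|_\infty$ shrinks over time. Two elementary properties of the log-sum-exp operator $\mathrm{lse}_\alpha(v)\doteq\alpha\log\sum_{a}\exp(v_a/\alpha)$ on $\mathbb R^{|\actions|}$ do all the work:
\begin{itemize}
\item[(i)] \emph{Smoothing bound:} $\max_a v_a\le \mathrm{lse}_\alpha(v)\le \max_a v_a+\alpha\log|\actions|$.
\item[(ii)] \emph{Non-expansiveness:} $|\mathrm{lse}_\alpha(v)-\mathrm{lse}_\alpha(w)|\le\|v-w\|_\infty$ for a fixed $\alpha$. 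This holds because $\mathrm{lse}_\alpha$ is monotone and $\mathrm{lse}_\alpha(v+c\mathbf 1)=\mathrm{lse}_\alpha(v)+c$; equivalently, its gradient is a softmax vector with entries summing to one.
\end{itemize}
Because (ii) holds pointwise for each $s'$ with $\alpha(s')$ fixed, $\mathcal T_\alpha$ is a $\gamma$-contraction in sup norm for any fixed state-dependent temperature.

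\textbf{Step 1: disagreement decays geometrically.} At time $t$ both runs apply the \emph{same} operator $\mathcal T_{\alpha_t}$, since $\alpha_t$ is shared. By (ii),
\[
\Delta_{t+1}=\|\mathcal T_{\alpha_t}Q_t^{(1)}-\mathcal T_{\alpha_t}Q_t^{(2)}\|_\infty\le\gamma\Delta_t ,
\]
hence $\Delta_t\le\gamma^t\Delta_0$. From \eqref{eq:disagreement_alpha}, $\max\{x,y\}\le x+y$, and $\|Q_t^{(1)}(s,\cdot)-Q_t^{(2)}(s,\cdot)\|_\infty\le\Delta_t$, this gives the uniform temperature bound
\[
\sup_s\alpha_t(s)\le\alpha_{\min}+\frac{\gamma^t\Delta_0}{\kappa}.
\]

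\textbf{Step 2: one-step error recursion.} Let $e_t\doteq\|Q_t^{(i)}-Q^\star\|_\infty$. I split
\[
\mathcal T_{\alpha_t}Q_t^{(i)}-\mathcal T Q^\star=\bigl(\mathcal T_{\alpha_t}Q_t^{(i)}-\mathcal T_{\alpha_t}Q^\star\bigr)+\bigl(\mathcal T_{\alpha_t}Q^\star-\mathcal TQ^\star\bigr).
\]
The first term is bounded by $\gamma e_t$ via (ii). The second lies in $[0,\gamma\sup_{s'}\alpha_t(s')\log|\actions|]$ by (i). Combining with Step 1,
\[
e_{t+1}\le\gamma e_t+\gamma\log|\actions|\Bigl(\alpha_{\min}+\frac{\gamma^t\Delta_0}{\kappa}\Bigr).
\]

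\textbf{Step 3: unroll.} Unrolling the recursion gives
\[
e_t\le\gamma^te_0+\sum_{k=0}^{t-1}\gamma^{t-1-k}\,\gamma\log|\actions|\Bigl(\alpha_{\min}+\frac{\gamma^k\Delta_0}{\kappa}\Bigr).
\]
The $\alpha_{\min}$ part is the geometric sum $\gamma\alpha_{\min}\log|\actions|\,(1-\gamma^t)/(1-\gamma)$, which is the red term. In the $\Delta_0$ part each summand equals $\gamma^{t}\Delta_0\log|\actions|/\kappa$, independent of $k$. Summing $t$ of them gives exactly $\frac{\Delta_0\log|\actions|}{\kappa}t\gamma^t$, the blue term.

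\textbf{Main obstacle.} The delicate point is Step 1: the temperature is itself a function of the iterates, so $\mathcal T_{\alpha_t}$ changes over time. The argument relies on the fact that at every $t$ both runs share the identical $\alpha_t$, so non-expansiveness still applies pairwise. I would also check that (ii) holds with state-dependent $\alpha(s')$ inside the expectation over $s'$. It does, because the bound is applied separately for each fixed $s'$ before taking the expectation.
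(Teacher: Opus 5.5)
Your proposal is correct and matches the paper's proof step for step: the shared-temperature disagreement contracts geometrically, which gives the uniform bound $\sup_s\alpha_t(s)\le\alpha_{\min}+\gamma^t\Delta_0/\kappa$, followed by the same approximate-contraction recursion and the same unrolling with $\Delta_0$ summands constant in $k$. The only cosmetic difference is your intermediate term: you use $\mathcal T_{\alpha_t}Q^\star$ (soft-operator contraction plus smoothing at $Q^\star$), while the paper uses $\mathcal T Q_t^{(i)}$ (hard-operator contraction plus smoothing at $Q_t^{(i)}$), and both yield the identical recursion.
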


{\em Proof Sketch}: The proof (Appendix~\ref{app:convergence_proof}) follows the standard approximate value-iteration template for the soft Bellman operator~\citep{ziebart2008maximum, haarnoja2018sac, levine2018reinforcement}. The key step is to relate the approximation error to our adaptive temperature. At iteration $t$, the soft backup differs from the unregularized Bellman backup by at most a term proportional to $\alpha_{\min}+\|Q_t^{(1)}-Q_t^{(2)}\|_\infty/\kappa$. As the two coupled runs use the same temperature, their disagreement contracts as $\|Q_t^{(1)}-Q_t^{(2)}\|_\infty\le \gamma^t\Delta_0$. Thus, the \textcolor{Blue}{disagreement-dependent error} vanishes after unrolling the recursion, leaving only the persistent \textcolor{BrickRed}{entropy-regularization bias} induced by the temperature floor $\alpha_{\min}$. $\qed$

Together, Theorems~\ref{thm:pairwise-kl} and~\ref{thm:disagreement_temperature_convergence} show that the disagreement-scaled soft Bellman updates approach a neighborhood of the unregularized optimum whose radius is controlled by $\alpha_{\min}$, while the policies induced by the coupled $Q$-functions remain uniformly close in KL divergence at every iteration $t$.

\section{Practical behavioral consistency via $Q$-value expectile disagreement}

Theorem~\ref{thm:pairwise-kl} suggests that we can achieve similar policies across different runs by tuning the temperature of the Boltzmann policy. 
A high temperature should lead to increased consistency across runs.
However, two crucial issues arise that we need to address. 
Theorem~\ref{thm:disagreement_temperature_convergence} relies on the difference in $Q$-values between different runs, but we do not have access to these in practice.
Therefore, we need to develop a practical estimator for $\alpha(s)$ with the information available in a {\em single} run to achieve high consistency without collapsing to a trivial solution.
In addition, high temperature introduces optimization difficulties in MaxEnt RL by amplifying off-policy extrapolation error.
We discuss how this can be mitigated by leveraging solutions from the literature on critic overestimation.

\subsection{An empirical proxy for setting \texorpdfstring{$\alpha(s)$}{alpha(s)}} 
\label{sec:empirical_de}

We now introduce \emph{$Q$-value Expectile Disagreement} (QED), which optimizes a  proxy of Theorem~\ref{thm:disagreement_temperature_convergence} that can be plugged into any MaxEnt deep RL algorithm. To obtain a practical instantiation of a behavioral-consistency regularization, we require two components: a disagreement heuristic and a practical approximation of the $\max$ operator in Equation~\ref{eq:disagreement_alpha}. 

To address the fact that each run only knows its own $Q$-function, we exploit the double $Q$ trick~\citep{fujimoto2018addressing}, where two neural network critics, $Q_{\theta_1}(s,a)$ and $Q_{\theta_2}(s,a)$, are trained in parallel from shared data. 
We use double-critic disagreement as a single-run proxy for cross-run disagreement. 
The underlying hypothesis is that the two critics are representative of the critics that would arise in two independent runs.
We provide empirical evidence for this hypothesis in Appendix~\ref{app:add_exps} by directly measuring the correlation between early double-critic disagreement and cross-run $Q$-function disagreement.
If the two critics are representative, then using its disagreement to regulate policy improvement also regulates the disagreement between policies that independent runs would induce.
In particular, when the critics disagree, the scaled temperature makes policy improvement sufficiently high-entropy that the resulting policies remain close. 
Since close policies induce similar data distributions and hence similar critic updates, this closeness persists throughout training. 

Next, we need to approximate the $\max$ over all actions in Equation~\ref{eq:disagreement_alpha}. Optimizing this maximum directly can induce an adversarial search over actions, as it highlights whichever action yields the largest critic discrepancy. 
This is undesirable, since $Q$-functions based on function approximation can be unreliable on unseen actions~\citep{thrun1993issues, fujimoto2019bcq}. Thus, the maximizer may be driven by extrapolation error rather than meaningful uncertainty, which would unnecessarily inflate $\alpha(s)$ and inject high-variance updates.
Instead, we compute an upper expectile of the double-critic difference under the current policy only. This yields a high-quantile summary of disagreement over actions the policy actually considers, capturing large but representative discrepancies without being dominated by a single worst-case action that lies far outside the support. 

In summary, at state $s$, QED draws $N$ actions $\{a_i\}_{i=1}^N$ from the current policy, $a_i \sim \pi(\cdot \mid s)$. It then evaluates both critics and aggregates the absolute differences with an expectile at level $\tau \in (0.5,1)$: 
\begin{equation*}
    \widetilde \Delta(s; Q_{\theta_1}, Q_{\theta_2}) = \operatorname{Expectile}_\tau\Bigl(\bigl\{|Q_{\theta_1}(s,a_i) - Q_{\theta_2}(s,a_i)|\bigr\}_{i=1}^N \,\big|\, a_i \sim \pi(\cdot \mid s)\Bigr).
\end{equation*}
Then, we can implement Equation~\ref{eq:disagreement_alpha} with our proxy by setting the adaptive temperature to:
\begin{equation}
    \alpha_{\text{QED}}(s)
    = \operatorname{clip}\left(
    \frac{\widetilde \Delta(s; Q_{\theta_1}, Q_{\theta_2})}{kd},
    \alpha_{\min},
    \alpha_{\max}
    \right),
\end{equation}
where $k>0$ is an algorithm-dependent hyperparameter that controls how aggressively disagreement increases temperature, analogous to $\kappa$ in Theorem~\ref{thm:pairwise-kl}, $d$ is the action dimension, used to scale constraints across tasks, and $\alpha_{\min},\alpha_{\max}$ ensure numerical stability. Note that, if the two critics agree closely at a state, $\alpha_{\text{QED}}(s)$ can become small, which can be problematic in sparse reward settings. Early in training, both critics may be near zero, causing entropy to collapse and exploration to vanish. To prevent this failure mode, we tune $\alpha_{\min} = \alpha_{\text{TE}}$ using the standard SAC temperature objective in Equation~\ref{eq:sac_alpha_loss}. $\alpha_{\max}$ is treated as a hyperparameter. 
With this, QED reduces to SAC with regular entropy tuning when the disagreement term vanishes or the divergence constraint is loose.

\subsection{Off-policy instability}

Increasing entropy typically widens the policy’s action support, which increases how often the critic is queried and bootstrapped on actions that are weakly covered by the replay distribution. To demonstrate that this is in fact an issue, we consider a toy example.

\begin{figure}[t]

    \begin{subfigure}[b]{0.99\textwidth}
        \centering
        \includegraphics[width=\linewidth, trim={0 2.5cm 0 0}, clip]{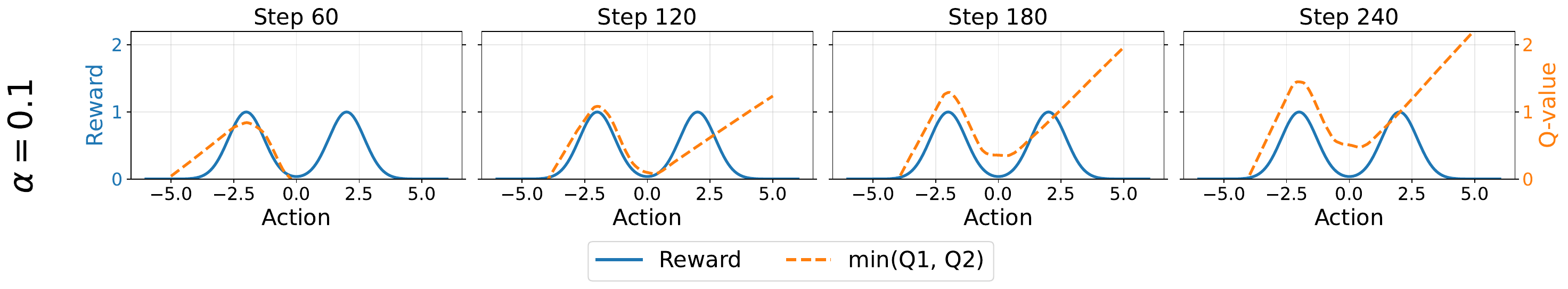}
    \end{subfigure}

    \begin{subfigure}[b]{0.99\textwidth}
        \centering
        \includegraphics[width=\linewidth, trim={0 0 0 0.9cm}, clip]{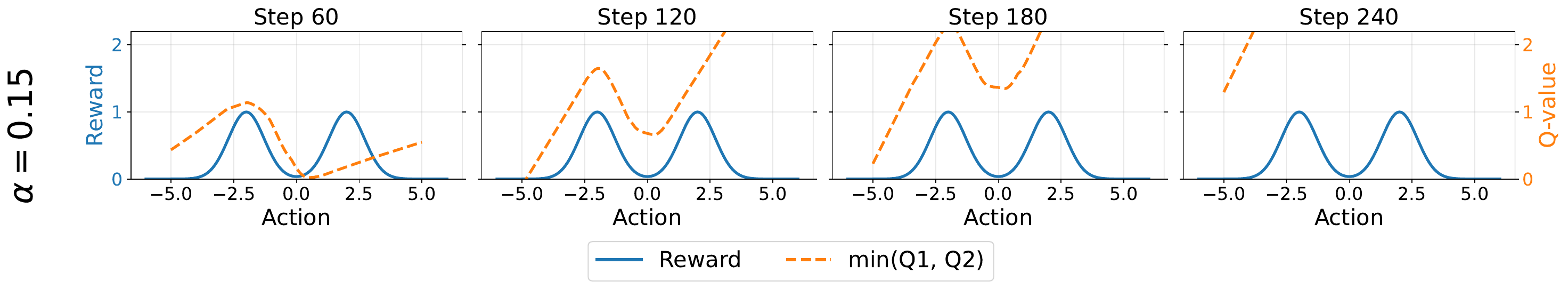}
        \caption{\small $Q$ plots demonstrate that limited action coverage induces extrapolation error outside replay support.}
    \end{subfigure}

    \begin{subfigure}[b]{0.99\textwidth}
        \centering
        \includegraphics[width=\linewidth, trim={0 2.5cm 0 0}, clip]{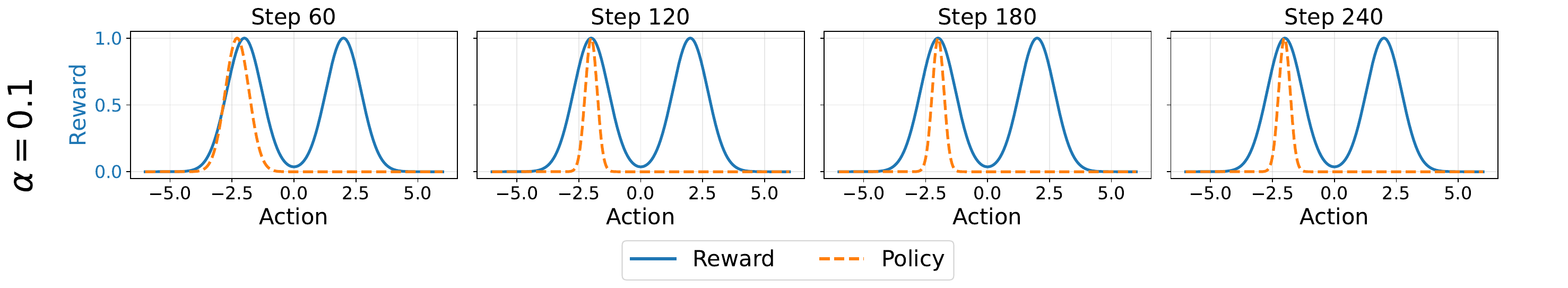}
    \end{subfigure}

    \begin{subfigure}[b]{0.99\textwidth}
        \centering
        \includegraphics[width=\linewidth, trim={0 0 0 0.9cm}, clip]{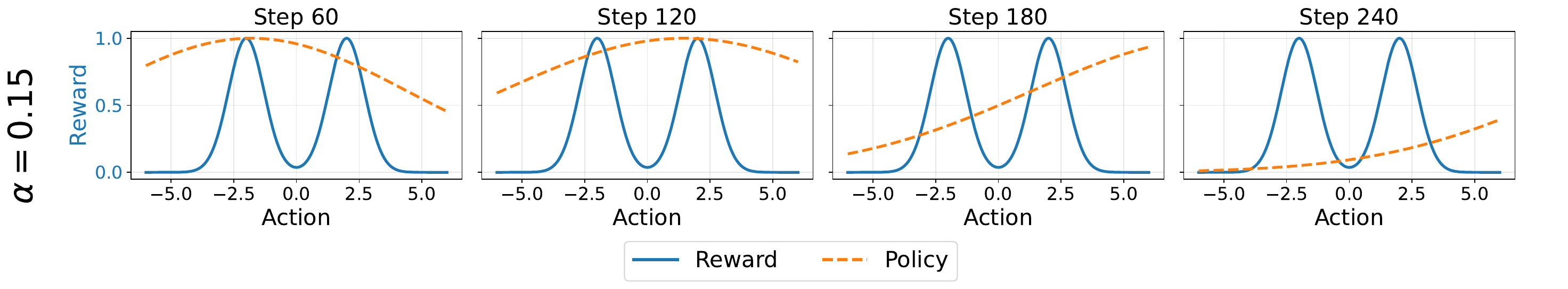}
        \caption{\small  Policy plots show that, for $\alpha=0.1$, the policy is able to consistently learn one of the modes. However, for the larger $\alpha=0.15$, the policy learns erroneous $Q$ values, pulling behavior away from the true reward modes.}
    \end{subfigure}

    \caption{\textbf{High entropy can amplify off-policy extrapolation error.} We consider the toy MDP and prefill the replay buffer with actions from only part of the action space, leaving one reward mode outside the data support. The learned $Q$-functions exhibit extrapolation error, and the policy accentuates this problem as it predicts value outside the support, particularly at larger $\alpha$ values.
    }
    \label{fig:ood_failure}
\end{figure}

\textbf{Toy example setup:}~~~
To study how increased entropy affects the behavioral consistency of an empirical algorithm with function approximation, we study an MDP with a single state, continuous action space $\actions = (-5,5)$, and no transition dynamics. 
For any action $a$, the agent receives a reward $r(a)$ given by a bimodal Gaussian density, with modes centered at $-2$ and $2$, each with variance $0.5$. 
The problem has horizon $H=2$ and discount factor $\gamma=0.9$. 
For visualization, we use a standard SAC-style implementation with a unimodal Gaussian policy over the action interval and fixed \(\alpha\), allowing us to directly compare the learned \(Q\)-landscape and the induced policy as \(\alpha\) varies.

\textbf{High entropy training collapse:}~~~
To showcase what happens during high-entropy training, we pre-fill the buffer of SAC with actions that lie only in $(-3, 0)$; thus, the right reward peak is outside of the data support. 
The agent is not allowed to collect new data to illustrate the effects of missing counterfactuals. 
We record both the critic networks' and the policy networks' predictions.

The results (Figure~\ref{fig:ood_failure}) demonstrate two key behaviors. 
First, even for small entropy coefficients ($\alpha=0.1$), the $Q$-function overestimates values for out-of-distribution (OOD) actions. 
Still, the policy learns the correct behavior because its narrow distribution prevents it from sampling and evaluating those poorly estimated OOD regions during the actor update. 
In contrast, when we increase $\alpha$ to $0.15$, the policy is forced to maximize a higher entropy bonus, resulting in an initially broad distribution that spans the entire action space (visible at Step 60). This wider sampling forces the critic to evaluate OOD actions where it lacks data, triggering the classical $Q$-value divergence spiral. 

The policy then updates to exploit these hallucinated $Q$-values rather than the true reward. Notably, the policy does not collapse into a single sharp peak over the highest $Q$-values; instead, it drifts to the right. The agent attempts to shift its probability mass toward the runaway linear extrapolation of the $Q$-values, but the strong entropy penalty forces the distribution to remain wide. Ultimately, the agent completely abandons the true reward structure, resulting in  functional policy collapse. 
This illustrates why behavioral consistency cannot be achieved by simply maximizing entropy; high entropy broadens action support, which can amplify critic extrapolation errors and destabilize learning.

\textbf{Empirical solutions:}~~~ 
The off-policy instability of high entropy RL suggests that we cannot na{\"i}vely apply the adaptive temperature scheduling to any double-critic RL algorithm and expect stable training.
We therefore evaluate the practical instantiation of our technique with two methods that are purposefully designed to handle strong action distribution shifts. 
Concretely, we use:
\begin{enumerate}[leftmargin=*, topsep=0pt]
    \item \textbf{Soft Actor-Critic with Layer Normalization (SAC-LN)}: The first algorithm we use is the traditional Soft Actor-Critic algorithm~\citep{haarnoja2018sac}. Motivated by recent work on stabilizing critic bootstrapping in off-policy reinforcement learning~\citep{hussing2024dissecting,nauman2024overestimation}, we use layer normalization~\citep{ba2016layer} in our SAC baseline. Layer normalization prohibits the Q-function from diverging when frequently queried on out-of-distribution data; however it does not fully remedy the out-of-distribution problem~\citep{hussing2024dissecting}.
    \item \textbf{Model-Augmented Data Soft-Actor Critic (MAD-SAC)}: The second algorithm is a MaxEnt version of MAD-TD~\citep{voelcker2025madtd}. MAD-TD builds on top of TD-MPC2~\citep{hansen2024tdmpc} and augments critic training with model-generated data to explicitly address the off-policy bootstrapping problem. For stability, it also uses layer normalization, as well as HL-Gauss and a self-prediction loss~\citep{li2023efficient, voelcker2024when,fujimoto2025towards}. For computational simplicity, we run the algorithm at a low update-to-data ratio and omit MPC. While this leads to minor performance loss, it is sufficient to demonstrate our claims.
\end{enumerate}

It is important to stress that, while our experimental evaluation uses these approaches, our method is agnostic to the exact base algorithm and can be applied to a wide variety of other stable off-policy RL algorithms \citep{nauman2024bigger,fujimoto2025towards,palenicek2025scaling,lee2025hyperspherical}.

\section{Experiments}
\label{sec:experiments}

We conduct experimental evaluation of QED and its ability to increase behavioral similarity across runs using the $18$ base tasks from the $\texttt{dm\_control}$ suite~\citep{tunyasuvunakool2020dmcontrol}.
The MDPs are designed with a frame skip of $2$, which is common on this benchmark~\citep{hansen2024tdmpc, voelcker2025madtd, palenicek2026xqc}; we set $\gamma=0.99$. All methods are run over $10$ random seeds, and we report $95$\% bootstrapped confidence intervals~\citep{agarwal2021deep}.

\subsection{Main experiments} 

First, we evaluate how QED reduces behavioral differences by applying it to the two algorithms described above across values of $k$. We collect $20$ evaluation rollouts with the final trained policies on all control tasks. Then, we report the cumulative return and measure similarity $\mathcal V$ using the symmetric KL divergence described in Appendix~\ref{app:kl_protocol}. The results are shown in Figure~\ref{fig:dmc_returns}. For an easier comparison, we also report the variances in Figure~\ref{fig:dmc_variance}.

\begin{figure}[t]
    \centering
    \begin{subfigure}[t]{0.66\linewidth}
        \centering
        \includegraphics[width=\linewidth]{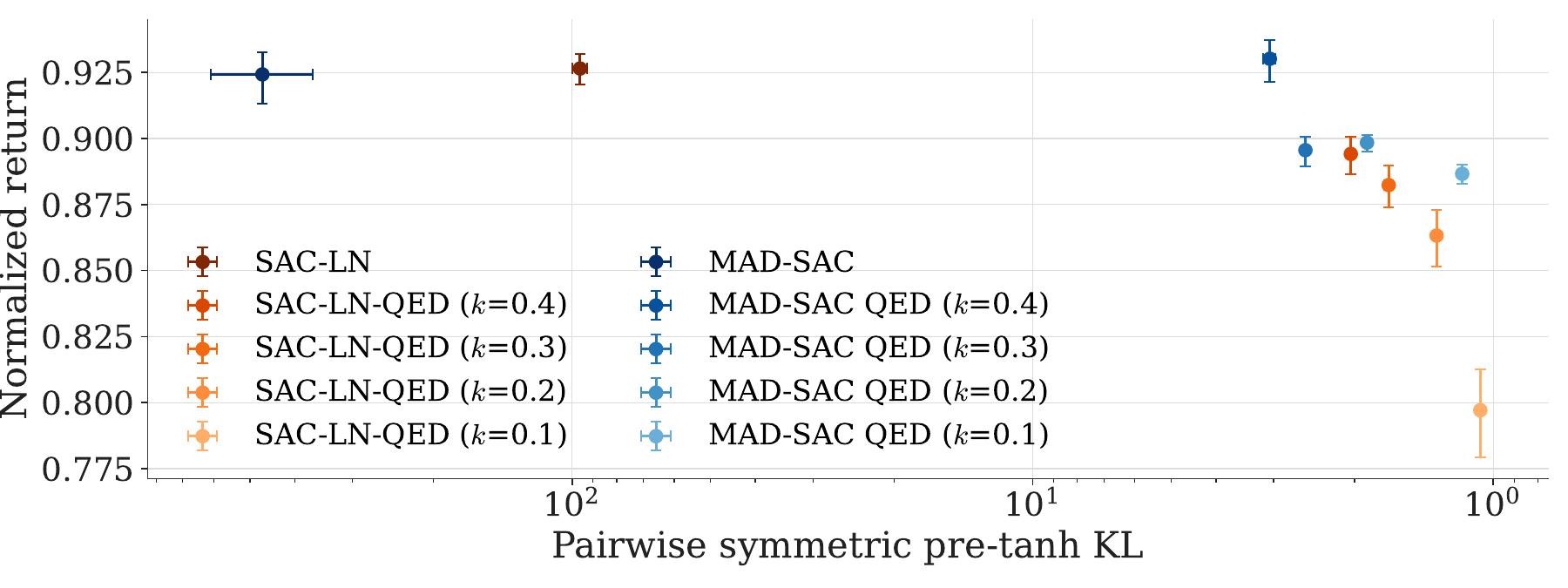}
        \caption{Symmetric KL vs returns on $\texttt{dm\_control}$}
        \label{fig:dmc_returns}
    \end{subfigure}\hfill
    \begin{subfigure}[t]{0.32\linewidth}
        \centering
        \raisebox{15mm}{
            \begin{minipage}{\linewidth}
                \centering
                \includegraphics[
                    width=\linewidth,
                    trim={0 0 0 0.5cm},
                    clip
                ]{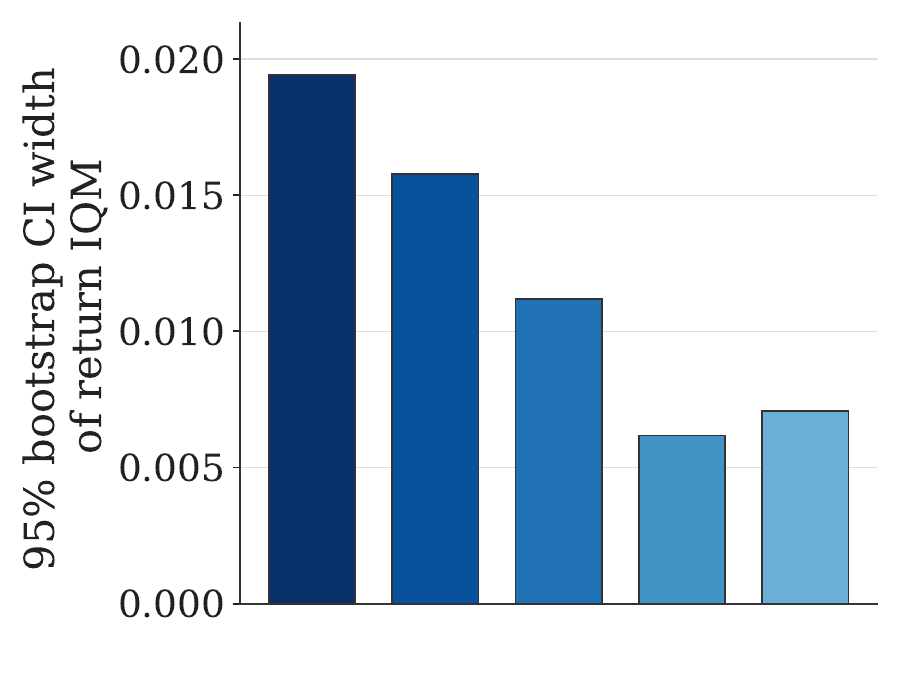}\\[-0.6em]
                \includegraphics[
                    width=4.5cm
                ]{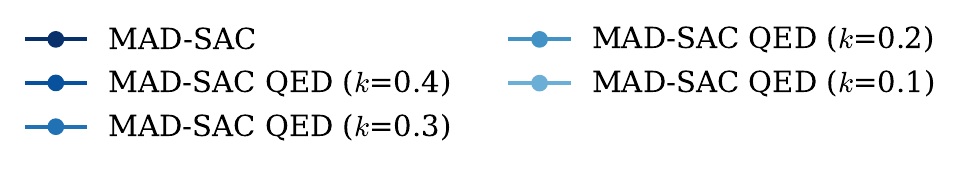}
            \end{minipage}
        }
        \caption{Variance in $\texttt{dm\_control}$}
        \label{fig:dmc_variance}
    \end{subfigure}
    \caption{\textbf{QED reduces inter-run policy divergence while preserving returns.}  (a) Final IQM of normalized return vs pairwise symmetric KL across independent training runs on the 18-task $\texttt{dm\_control}$ suite. 
    Lower KL indicates more behaviorally consistent policies. Applying QED to both SAC-LN and MAD-SAC decreases pairwise KL by about two orders of magnitude, while retaining comparable normalized return.
    (b) Width of the 95\% bootstrap CI of the evaluation IQM estimates under MAD-SAC. QED reduces return variance, suggesting that behavioral consistency implies stable returns.}
    \label{fig:main}
\end{figure}

\textbf{$\mathbf{Q}$ED increases behavioral similarity:}~~~Our first observation is that, across the board, both baseline algorithms achieve high returns; however, the KL differences between runs are very large. This makes the setting a suitable candidate for evaluating our intervention.
As we apply QED to both algorithms, we observe a drastic increase in behavioral similarity by two orders of magnitude and, with a well-chosen value of $k$, performance does not decrease. 
However, to get to very low KL divergences
(close to a value of $1$) requires small values of $k$, and as a result both algorithms see a reduction 
in return. However, MAD-SAC (which is made to deal with out-of-distribution prediction explicitly) Pareto dominates SAC as predicted and only incurs a reduction in performance of of roughly $5$\% at the lowest KL level while SAC loses more than $10$\% in performance. Appendix~\ref{app:full_exp} shows that this is in part due to a predicted loss in sample efficiency as the algorithm needs to explore until it can reduce its $Q$-function uncertainty.

\textbf{Relationship between variance and similarity:}~~~Using MAD-SAC, our results explicitly show that as $k$ decreases, the return variance shrinks as well by up to 50\%. This validates the hypothesis that behavioral similarity can be a proxy for return variance. Yet, we find that SAC itself does not exhibit this effect. This can be attributed to the fact that SAC is not designed to handle the bootstrapping problem in the same way. Runs that lose performance often do so because they become unstable, rather than because performance decreases consistently across seeds. In other words, the reduction in performance is driven by individual runs failing to learn correctly. MAD-SAC was designed to mitigate this failure mode, which is why its variance shrinks gracefully to a fixed point as $k$ decreases.

\textbf{Ablating $k$:}~~~As we increase $k$ in both algorithms, the trend is to move back toward the original performance of the base algorithms. This is again to be expected as our approach directly generalizes traditional entropy tuning. On the other hand, as we decrease $k$, the KL-divergence decreases, mapping out a Pareto frontier induced by the performance-vs-similarity trade-off. 
As the parameter $k$ controls this trade-off, we recommend starting with the largest $k$ that meaningfully reduces divergence relative to the base algorithm and lower it when consistency is prioritized over sample efficiency. In our experiments, $k \in [0.1,0.4]$ was stable for MAD-SAC, while SAC starts to slowly deteriorate in that range. Thus, $k$ is algorithm- and domain-dependent, and less-stable algorithms may need weaker constraints to avoid amplifying critic error.

\subsection{Evaluation of behavior rollout effects}

Next, we are interested in ablating the behavioral changes that we can observe due to QED to verify that we are obtaining more consistent long-term behavior and not just pointwise improvements.
To test this hypothesis, we run a quantitative and a qualitative analysis of behavior rollouts.

\begin{wrapfigure}{r}{0.45\linewidth}
    \vspace{-6pt}
    \centering

    \includegraphics[width=6.8cm]{figures/final_results/legend_action_diff_new.pdf}
    \includegraphics[width=\linewidth]{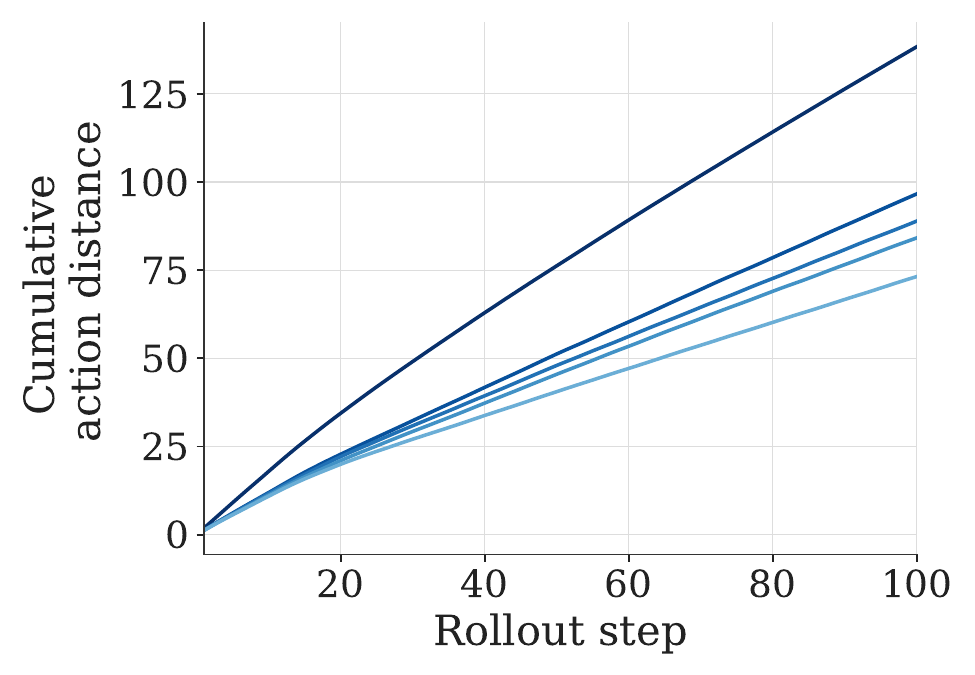}

    \caption{\textbf{QED produces more consistent rollout-level behavior across independently trained policies.} Measuring pairwise $L_2$ action distances across policies at each step, we find that QED reduces cumulative action distance.
    }
    \vspace{-20pt}
    \label{fig:action_rollouts}
\end{wrapfigure}
\textbf{Action distance:}~~~We take the trained MAD-SAC policies and roll them out for $20$ evaluation trajectories of $100$ steps. In each task, we compute the pairwise ($L_2$) action distance between
all policies trained on that task at every step. We report the cumulative distance over all steps and take the mean over all environments. Due to this evaluation's structure, it is difficult to construct a sensible measure of variance; we omit it, as the sample size is very large and confidence intervals would be tight.

Figure~\ref{fig:action_rollouts} shows that QED produces a large drop in action distance, providing quantitative evidence that lower KL is not merely due to higher-variance action distributions but rather because the \emph{distribution means} are closer in aggregate. Increasing the behavioral constraint by decreasing $k$ is correlated with a decrease in action distance. This provides  quantitative evidence that QED is learning policies that behave more similarly at test time.

\textbf{Visual similarity:}~~~Next, we examine qualitative rollouts of our policies. We choose the $\texttt{cheetah\_run}$ environment where both policies perform well and we see a reasonably low KL divergence using QED (see Appendix~\ref{app:full_exp}). We initialize environments at a fixed state. Then, we execute policies from both the target-entropy version of MAD-SAC and the QED version. We visualize the behavior of each agent as it walks for $25$ steps in Figure~\ref{fig:visualization}.

The policies trained with target entropy tuning learn behaviors that appear qualitatively distinct. One starts with a small hop, another initially walks, and a third begins with a medium-sized hop before transitioning into a larger jump.
These are three distinct behaviors obtained by regular training. 
In contrast, all policies trained with QED exhibit consistent behavioral pattern that immediately starts the task with a wide jump.
This illustrates that QED improves behavioral similarity not only quantitatively but visibly, a key requirement for making it useful for reward function tuning.

\subsection{Ablating fixed $\alpha$}

\begin{figure}[t]
    \centering
    \includegraphics[width=0.75\linewidth]{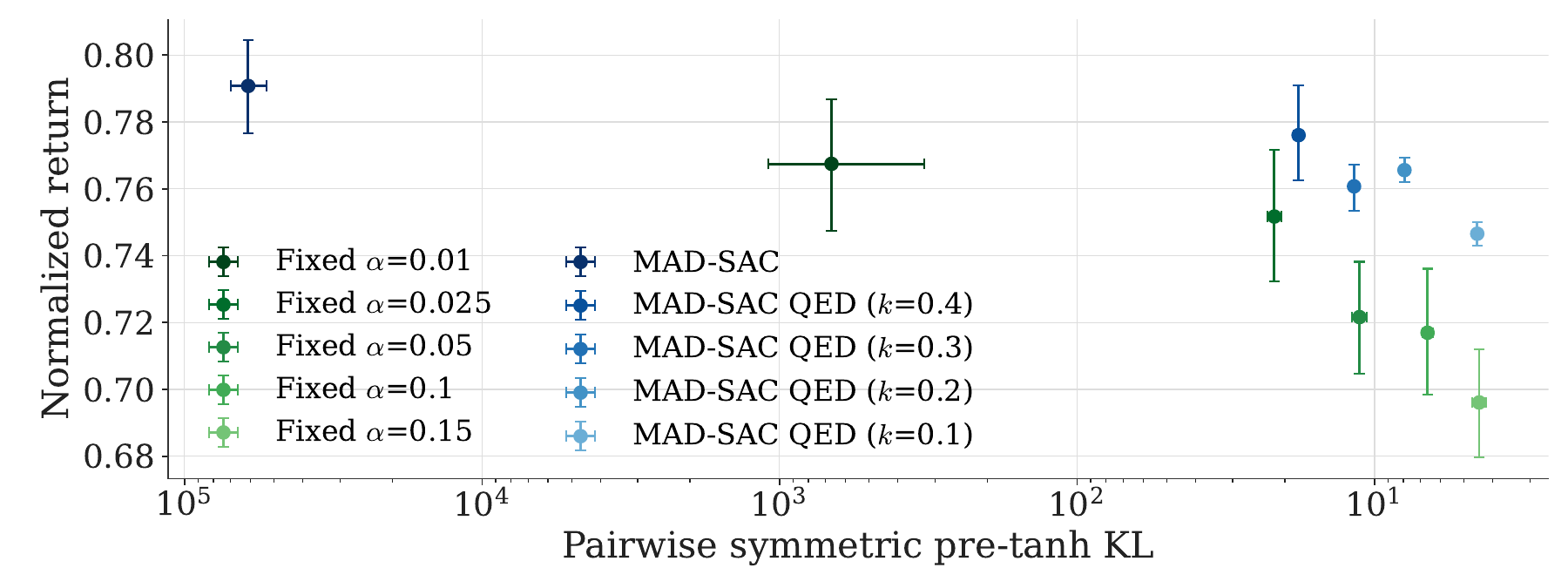}
    \caption{\textbf{Adaptively choosing $\alpha$ via QED leads to higher performance at the same KL-distance compared to static values of $\alpha$. } Mean normalized return vs pairwise symmetric KL across independent runs on the 18-task \texttt{dm\_control} suite comparing QED vs static values of $\alpha$. Fixed values of $\alpha$ lead to decrease in pairwise KL but also yield strictly lower performance than using QED.}
    \label{fig:fixedalpha}
\end{figure}

The guarantees for policy similarity without considering return are naturally also achievable by simply choosing a relatively high static $\alpha$. To illustrate the benefits of tying $\alpha$ directly to the Q-function disagreement, we compare MAD-SAC with QED to a variety of fixed alpha values. In particular, we measure the mean return and pairwise KL-divergence across policies.
Here, we report the mean rather than the IQM used previously because the mean is more sensitive to the low-return runs induced by large fixed temperatures. The results are shown in Figure~\ref{fig:fixedalpha}.

As expected, working with a large fixed $\alpha$ reduces the pairwise KL divergence between policies. However, this reduction comes with a marked decrease in mean return. In contrast, QED maintains relatively stable performance across the tested values of $k$, yielding a Pareto frontier that dominates that of fixed $\alpha$.
The decline under fixed temperatures is driven by an increasing number of low-return seeds, which lowers the mean and increases the variation across runs. 
At the lowest measured symmetric KL level, a fixed $\alpha$ incurs an approximately 7\% lower mean performance than QED.

\subsection{Tight KL control over high-variance tasks}

Finally, to demonstrate the exploration benefits of QED, we highlight a challenging task in the $\texttt{dm\_control}$ suite: the $\texttt{hopper\_hop}$ task. Across various state-of-the-art algorithms, the return variance on this task is very high and 
reported mean performance ranges from $250$-$500$ with confidence intervals often half as large as the mean returns over a few trials~\citep{doro2023barrier,lee2025hyperspherical, palenicek2026xqc}. This can be partially explained by the difficult initial state distribution of the task and the unstable dynamics of the single-legged hopper~\citep{voelcker2024can}.
Both lead to strong dissimilarity in behavior across runs.
Figure~\ref{fig:hopper} shows the reward curves during training of this task.
Relatedly, \citet{hussing2024dissecting} argue that this task likely requires strong exploration.

\begin{wrapfigure}{r}{0.45\linewidth}
    \vspace{-6pt}
    \centering
        \centering

        \includegraphics[width=4.8cm]{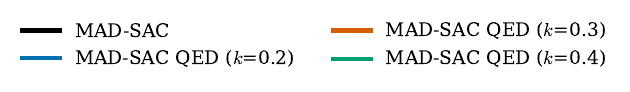}
        \includegraphics[width=\linewidth]{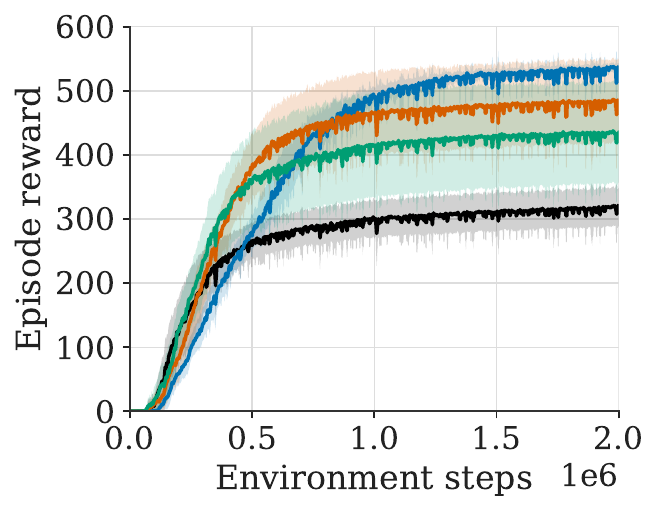}
        
        \caption{\textbf{QED reduces training-time variance and policy divergence on a high-variance control task.} Return over training steps. QED improves performance and reduces dispersion across seeds. 
        }
        \label{fig:hopper}
        \vspace{-40pt}
\end{wrapfigure}
As expected, the base MAD-SAC algorithm has high variance 
during training 
but adding QED improves performance.
A complementary interpretation of QED is that it increases policy entropy and thus exploration.
An interesting phenomenon occurs when the QED constraint is strong enough at $k=0.2$, as all runs converge to an identical performance.
This suggests that QED shapes exploration, with stronger constraints aligning executions in policy space.

\subsection{Limitations} \label{sec:limitations}

A key limitation of QED is that it relies on pointwise $Q$-estimates and their agreement. In these experiments, we find that disagreement can often take a long time to vanish, and it is not currently clear when vanishing disagreement is an appropriate criterion in practice. 
Furthermore, while QED provides a consistent boost in behavioral similarity, higher-dimensional tasks reveal a more pronounced trade-off between performance and similarity. KL values can often remain above $1$, which we find to be an important threshold for multi-step similarity. We hope that future work can improve these issues 
while ensuring consistent behavior over very long horizons.

\section{Related work}

\label{sec:related}

{\bf Reproducibility, replicability, and run-to-run variability in RL.~~}
Deep RL is well known to be sensitive to random seeds, implementation choices, environment stochasticity, and evaluation protocol~\citep{islam2017reproducibility,henderson2018matters,colas2018how}. 
This sensitivity makes empirical conclusions statistically fragile, especially when comparisons rely on small numbers of runs, motivating robust aggregate metrics such as the interquartile mean and bootstrap confidence intervals~\citep{agarwal2021deep}. 
At the same time, benchmarking return alone can be limited as a scientific instrument for understanding the behavior learned by RL algorithms~\citep{jordan2024position}.  

A related theoretical line studies replicability~\citep{impagliazzo2022reproducibility}, asking when repeated executions of a learning algorithm produce identical outputs. 
Recent work has begun to formalize this question for RL~\citep{eaton2023replicable,karbasi2023replicability,hopkins2025from,eaton2026replicable}, but existing results primarily focus on tabular or linear-function-approximation settings. 
\citep{eaton2026replicable} provide a first approach to minimize disagreement across independent runs in neural networks inspired by their theoretical results but their experimental results are limited to two environments and discrete action spaces.
Closer to ours, empirical work has studied variability beyond aggregate return, including performance variation among Atari agents~\citep{clary2018variability} and reliability metrics for learned policies~\citep{chan2020measuring}. Policy churn during deep RL training has also been analyzed~\citep{schaul2022policy} and mitigated~\citep{tang2024improving}, whereas we directly target cross-run behavioral consistency.

{\bf Off-policy error~~}
Our analysis is also connected to the broader literature on instability in off-policy RL with function approximation. 
The classical ``deadly triad'' of bootstrapping, off-policy data, and function approximation has long been understood as a source of divergence~\citep{thrun1993issues,tsitsiklis1996analysis,precup2001off,hasselt2018deep}.
One of the most popular interventions in modern actor-critic algorithms to deal with overestimation is the double critic estimator~\citep{hasselt2010double,hasselt2016deep,fujimoto2018addressing}.
This idea has been pushed to the limit in ensemble critics that help mitigate divergence~\citep{chen2021randomized, hiraoka2022dropout}.
Alternative interventions that stabilize learning include conservative value updates~\citep{fujimoto2019bcq, kumar2020cql}, categorical value function representations~\citep{bellemare2017distributional, farebrother2024stop} and normalization~\citep{ball2023efficient, bhatt2024crossq, hussing2024dissecting, nauman2024overestimation, lyle2025disentangling}.
Recent work has shown that high update-to-data ratios can improve sample efficiency~\citep{nikishin2022primacy, doro2023barrier} but exacerbate value divergence~\citep{hussing2024dissecting} unless stabilized by regularization~\citep{palenicek2025scaling}, model-augmented data~\citep{voelcker2025madtd}, architectural changes~\citep{hussing2024dissecting, fujimoto2025towards,palenicek2026xqc}, or algorithmic interventions~\citep{romeo2025speq}. 
Our toy examples show that high entropy can interact with these same off-policy errors by widening policy support and forcing actor updates to query poorly supported regions of the critic. 
QED includes a built-in mechanism to avoid querying actions with spuriously high Q-value disagreement, but it is still most effective when paired with actor-critic methods that already control overestimation well.

{\bf Policy regularization in RL.~~}
MaxEnt RL augments reward maximization with an entropy bonus, producing stochastic policies that encourage exploration and avoid premature collapse to deterministic actions~\citep{ziebart2008maximum,haarnoja2017reinforcement,haarnoja2018sac}
This framework is closely connected to KL-regularized control and probabilistic inference views of RL, where policy improvement is regularized toward a prior or reference distribution~\citep{todorov2009efficient,peters2010relative,levine2018reinforcement, geist2019atheory}. 
Soft Actor-Critic instantiates this idea in deep off-policy actor-critic learning, using entropy regularization and automatic temperature tuning to target a desired policy entropy~\citep{haarnoja2018sac, haarnoja2018applications}. 
Other forms of regularization, including mutual information~\citep{leibfried2020mutual}, control priors~\citep{johannink2019residual}, and policy regularization~\citep{fox2016taming,liu2021regularization}, have been used to improve exploration, robustness, and optimization stability. Closest in spirit, ~\citet{cheng2019control} regularize policies toward a control-theoretic prior to reduce variance in reinforcement learning. In contrast, we reduce cross-run behavioral divergence by adapting the entropy temperature from critic disagreement, without requiring an external controller prior.

{\bf Uncertainty-driven exploration and critic disagreement.~~}
A large body of work uses uncertainty estimates to guide exploration or reduce value-estimation error, including ensembles, bootstrapping, optimistic objectives, and epistemic bonuses~\citep{osband2016deep, donoghue2018the, osband2019deepexplorationrandomizedvalue,kamil2019optimistic,chen2021randomized,lee2021sunrise,moskovitz2021tactical}. 
QED also uses critic disagreement, but for a different role, treating disagreement as a local proxy for policy instability, with increased entropy implicitly promoting exploration in uncertain regions.

\section{Conclusion}
\label{sec:conclusion}

As RL moves toward larger models, longer training horizons, and real-world deployment, variability across runs becomes more than an evaluation nuisance: it limits our ability to predict, debug, and systematically improve learning systems. Behavior-consistent RL offers a framework to address this problem, and we show in this paper that enforcing consistent policies is both feasible and beneficial. Our results suggest that behavioral consistency may be a missing ingredient in scalable and reliable RL. If independent runs of the same RL algorithm can converge to different behavior, then evaluation, reward design, safety validation, and scaling-law estimation of performance all become more difficult. 

Our results also expose new research questions. Many tasks admit multiple valid behavior strategies, raising the question of when diversity across runs should be preserved versus eliminated. Similarly, the tradeoff between consistency and sample complexity suggests deeper connections between uncertainty, replicability, and scalable policy optimization.  Controlling behavioral variability opens a new direction for RL research, enabling the design of algorithms whose results are not only performant but also stable, predictable, and scientifically reproducible.

\section*{Acknowledgments}

MH and EE were partially supported by DARPA grant \#HR00112420305. LdA was supported by a First-Year Fellowship from the Princeton University Graduate School.
The authors are also pleased to acknowledge that the work reported on in this paper was substantially performed using the General Robotics, Automation, Sensing and Perception Lab's cluster at UPenn as well as the Princeton Research Computing resources at Princeton University which is consortium of groups led by the Princeton Institute for Computational Science and Engineering (PICSciE) and Office of Information Technology's Research Computing.

Our thanks to Raj Ghugare and Cathy Ji for reading early drafts of this work. Special thanks to the Graduate Student Group at Frist Center and Sertraline. Any opinions, findings, and conclusions or recommendations expressed in this material are those of the author(s) and do not necessarily reflect the views, position, or policy of DARPA or the US Government.

\bibliographystyle{plainnat}
\bibliography{references}

\newpage

\appendix

\section{Proofs}

We provide the proofs for the two theoretical results, Theorems~\ref{thm:pairwise-kl} and~\ref{thm:disagreement_temperature_convergence}, that motivate QED. We prove that, for two Boltzmann policies induced by different $Q$-functions at a fixed state, setting the shared temperature proportional to their pointwise $Q$-disagreement bounds the KL divergence between the resulting policies. We then prove that using this disagreement-scaled temperature inside soft-value iteration still approaches a neighborhood of the unregularized optimum: the only persistent error comes from the entropy floor $\alpha_{\min}$, while the disagreement-dependent term vanishes over time. Together, these arguments formalize the principle behind QED: entropy should be high when value estimates disagree, but should shrink as those disagreements contract.

\subsection{Proof of Theorem~\ref{thm:pairwise-kl}} \label{app:kl_proof}

\begin{theorem*}[Pairwise KL control via disagreement-scaled temperature]
Assume $\actions$ is finite and fix a state $s\in\states$.
Let $Q^{(1)}(s,\cdot),Q^{(2)}(s,\cdot)\in\mathbb R^{|\actions|}$ be two action-value vectors, and fix $\kappa>0$ and $\alpha_{\min} > 0$.
Define the shared temperature
\begin{equation} 
\alpha(s) \doteq \max\left\{\alpha_{\min}, \frac{\|Q^{(1)}(s,\cdot)-Q^{(2)}(s,\cdot)\|_\infty}{\kappa}\right\}.
\end{equation}
Let $\pi^{(1)}(\cdot\mid s)$ and $\pi^{(2)}(\cdot\mid s)$ be the Boltzmann policies at temperature $\alpha(s)$. Then
\begin{equation*}
D_{\mathrm{KL}}(\pi^{(1)}(\cdot\mid s)\|\pi^{(2)}(\cdot\mid s))\le 2\kappa.
\end{equation*}
\end{theorem*}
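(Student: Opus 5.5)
The plan is to write the KL divergence between the two Boltzmann policies as an expectation of their log-ratio, and to bound that log-ratio pointwise. Throughout, fix $s$ and abbreviate $q_i(a) \doteq Q^{(i)}(s,a)/\alpha(s)$ and $Z_i \doteq \sum_{a\in\actions}\exp(q_i(a))$, so that $\pi^{(i)}(a\mid s)=\exp(q_i(a))/Z_i$. Set
\[
\delta \doteq \frac{\|Q^{(1)}(s,\cdot)-Q^{(2)}(s,\cdot)\|_\infty}{\alpha(s)}.
\]
By the definition of $\alpha(s)$ in the statement we have $\alpha(s)\ge \|Q^{(1)}(s,\cdot)-Q^{(2)}(s,\cdot)\|_\infty/\kappa$, and hence $\delta\le\kappa$. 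This is the only place the choice of temperature enters. The floor $\alpha_{\min}$ only increases $\alpha(s)$, so it can only help. It also guarantees $\alpha(s)>0$ even when the two $Q$-vectors coincide, so the policies are well defined.

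Next, decompose the log-ratio. For each action $a$,
\[
\log\frac{\pi^{(1)}(a\mid s)}{\pi^{(2)}(a\mid s)} = \bigl(q_1(a)-q_2(a)\bigr) - \bigl(\log Z_1 - \log Z_2\bigr).
\]
The first term is bounded in absolute value by $\delta$ directly from the definition of the sup-norm. For the second term, note that every summand satisfies $e^{-\delta}e^{q_2(a')}\le e^{q_1(a')}\le e^{\delta}e^{q_2(a')}$. Summing over $a'$ gives $e^{-\delta}Z_2\le Z_1\le e^{\delta}Z_2$, so $|\log Z_1-\log Z_2|\le\delta$. Combining the two pieces yields the pointwise bound
\[
\left|\log\frac{\pi^{(1)}(a\mid s)}{\pi^{(2)}(a\mid s)}\right| \le 2\delta \quad\text{for every } a\in\actions.
\]

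Finally, take the expectation under $\pi^{(1)}(\cdot\mid s)$:
\[
D_{\mathrm{KL}}\bigl(\pi^{(1)}(\cdot\mid s)\,\|\,\pi^{(2)}(\cdot\mid s)\bigr) = \mathbb E_{a\sim\pi^{(1)}(\cdot\mid s)}\left[\log\frac{\pi^{(1)}(a\mid s)}{\pi^{(2)}(a\mid s)}\right] \le 2\delta \le 2\kappa.
\]
Finiteness of $\actions$ ensures that all sums converge and that both policies have full support, so the log-ratio is finite everywhere.

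There is no genuine obstacle in this argument. The only step needing care is the normalizer comparison, which is the multiplicative sandwich bound on $Z_1$ above. A sharper constant could be obtained from the convexity of log-sum-exp, but it is not needed for the stated $2\kappa$ bound.
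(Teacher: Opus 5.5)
Your proof is correct and takes essentially the same route as the paper. You split the log-ratio into the $Q$-difference term and the log-normalizer term, bound each by $\|Q^{(1)}(s,\cdot)-Q^{(2)}(s,\cdot)\|_\infty/\alpha(s)$ using the termwise sandwich on the partition functions, and conclude from the choice of $\alpha(s)$. The only cosmetic difference is that you bound the log-ratio pointwise in absolute value before taking the expectation, whereas the paper takes the expectation first and uses one-sided bounds.
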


\begin{proof}
Fix some $s \in S$. Let $Z^{(i)}\doteq\sum_{b\in\actions}\exp(Q^{(i)}(s,b)/\alpha(s)), \, i\in\{1,2\}.$
For any action $a$, we can write out the log ratio between our two policies as
\begin{equation*}
\log\frac{\pi^{(1)}(a\mid s)}{\pi^{(2)}(a\mid s)}
= \frac{Q^{(1)}(s,a)-Q^{(2)}(s,a)}{\alpha(s)} + \log\frac{Z^{(2)}}{Z^{(1)}}.
\end{equation*}
Taking expectation with respect to $a\sim\pi^{(1)}(\cdot\mid s)$ gives the KL divergence 
\begin{equation*}
D_{\mathrm{KL}}(\pi^{(1)}\|\pi^{(2)})
= \frac{\mathbb E_{a\sim\pi^{(1)}}[Q^{(1)}(s,a)-Q^{(2)}(s,a)]}{\alpha(s)}
+ \log\frac{Z^{(2)}}{Z^{(1)}}.
\end{equation*}

We bound these two terms independently. For the first term, it is easy to see that
\begin{equation*}
\frac{\mathbb E_{a\sim\pi^{(1)}}[Q^{(1)}(s,a)-Q^{(2)}(s,a)]}{\alpha(s)}
\le \frac{\mathbb E_{a\sim\pi^{(1)}}[|Q^{(1)}(s,a)-Q^{(2)}(s,a)|]}{\alpha(s)}
\le \frac{\|Q^{(1)}(s,\cdot)-Q^{(2)}(s,\cdot)\|_\infty}{\alpha(s)}.
\end{equation*}

For the second term, for every $b\in\actions$,
\begin{align*}
    \exp(Q^{(2)}(s,b)/\alpha(s))
    &= \exp((Q^{(1)}(s,b)-(Q^{(1)}(s,b)-Q^{(2)}(s,b)))/\alpha(s)) \\
    &\le \exp(Q^{(1)}(s,b)/\alpha(s))
    \exp(\|Q^{(1)}(s,\cdot)-Q^{(2)}(s,\cdot)\|_\infty/\alpha(s)).
\end{align*}
Summing over $b$ yields
$Z^{(2)}\le Z^{(1)}\exp(\|Q^{(1)}(s,\cdot)-Q^{(2)}(s,\cdot)\|_\infty/\alpha(s)),$
and therefore
\begin{equation*}
\log\frac{Z^{(2)}}{Z^{(1)}}
\le \frac{\|Q^{(1)}(s,\cdot)-Q^{(2)}(s,\cdot)\|_\infty}{\alpha(s)}.
\end{equation*}

Combining the bounds,
\begin{equation*}
D_{\mathrm{KL}}(\pi^{(1)}\|\pi^{(2)})
\le \frac{2\|Q^{(1)}(s,\cdot)-Q^{(2)}(s,\cdot)\|_\infty}{\alpha(s)}.
\end{equation*}
The result follows from the construction of $\alpha(s)$.
\end{proof}

\paragraph{Symmetric KL.}
Although Theorem~\ref{thm:pairwise-kl} is stated for directed KL, the same
bound immediately applies to the symmetric KL used in our empirical metric.
Indeed, applying the theorem once to
\(D_{\mathrm{KL}}(\pi^{(1)}\|\pi^{(2)})\) and once with the two \(Q\)-functions
reversed gives
\[
D_{\mathrm{KL}}(\pi^{(1)}\|\pi^{(2)})\le 2\kappa,
\qquad
D_{\mathrm{KL}}(\pi^{(2)}\|\pi^{(1)})\le 2\kappa.
\]
Therefore,
\[
\frac{1}{2}
\left[
D_{\mathrm{KL}}(\pi^{(1)}\|\pi^{(2)})
+
D_{\mathrm{KL}}(\pi^{(2)}\|\pi^{(1)})
\right]
\le 2\kappa.
\]
Thus the idealized Boltzmann analysis controls the same symmetrized divergence form used in the experiments.

\subsection{Proof of Theorem~\ref{thm:disagreement_temperature_convergence}} \label{app:convergence_proof}

\begin{theorem*}[Convergence under disagreement-scaled temperature]
Assume $\alpha_{\min}>0$. Let $Q^\star$ denote the optimal Q-function of the unregularized MDP. For a temperature $\alpha:\states\to\mathbb R_{>0}$, define
\begin{equation*}
(\mathcal T_\alpha Q)(s,a)
\doteq
r(s,a)
+
\gamma\mathbb E_{s'\sim P(\cdot\mid s,a)}
\left[
\alpha(s')\log\sum_{a'\in\actions}
\exp\left(\frac{Q(s',a')}{\alpha(s')}\right)
\right].
\end{equation*}
Consider the coupled iterates $Q_{t+1}^{(i)}=\mathcal T_{\alpha_t}Q_t^{(i)}$, $i\in\{1,2\}$, where $\alpha_t(s)$ is the shared temperature from \eqref{eq:disagreement_alpha}. Let $\Delta_0\doteq \|Q_0^{(1)}-Q_0^{(2)}\|_\infty$ be the initial disagreement. Then, for $i\in\{1,2\}$ and $t\ge 0$,
\begin{equation*}
\|Q_t^{(i)}-Q^\star\|_\infty
\le
\gamma^t\|Q_0^{(i)}-Q^\star\|_\infty
+
\frac{\gamma\alpha_{\min}\log|\actions|}{1-\gamma}(1-\gamma^t)
+
\frac{\Delta_0\log|\actions|}{\kappa}\,t\gamma^t .
\end{equation*}
\end{theorem*}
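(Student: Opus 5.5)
We follow the standard approximate value-iteration template, comparing the soft operator $\mathcal T_{\alpha_t}$ to the unregularized Bellman optimality operator $\mathcal T$, whose fixed point is $Q^\star$. The argument has three steps.

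\textbf{Step 1: log-sum-exp sandwich.} For any vector $x\in\mathbb R^{|\actions|}$ and $\alpha>0$,
\[
\max_a x_a\le \alpha\log\sum_a \exp(x_a/\alpha)\le \max_a x_a+\alpha\log|\actions|.
\]
Consequently, for every $Q$,
\[
0\le (\mathcal T_{\alpha_t}Q)(s,a)-(\mathcal T Q)(s,a)\le \gamma\,\mathbb E_{s'}[\alpha_t(s')]\log|\actions|\le \gamma\|\alpha_t\|_\infty\log|\actions|.
\]
By definition of the coupling temperature, $\|\alpha_t\|_\infty\le \alpha_{\min}+\Delta_t/\kappa$, where $\Delta_t\doteq\|Q_t^{(1)}-Q_t^{(2)}\|_\infty$ (using $\max\{x,y\}\le x+y$ for nonnegative reals).

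\textbf{Step 2: contraction of the disagreement.} Because both runs use the same temperature $\alpha_t$ at iteration $t$, we compare $\mathcal T_{\alpha_t}Q_t^{(1)}$ with $\mathcal T_{\alpha_t}Q_t^{(2)}$. For fixed $\alpha$, the map $x\mapsto\alpha\log\sum\exp(x/\alpha)$ is $1$-Lipschitz in $\|\cdot\|_\infty$: its gradient is a softmax, or equivalently one uses the same monotonicity and shift argument as in the proof of Theorem~\ref{thm:pairwise-kl}. Hence $\mathcal T_{\alpha}$ is a $\gamma$-contraction for any fixed $\alpha$, so $\Delta_{t+1}\le\gamma\Delta_t$ and $\Delta_t\le\gamma^t\Delta_0$. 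The subtle point is that $\alpha_t$ depends on the iterates, but this causes no trouble: at each step the temperature is a fixed function shared by both runs.

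\textbf{Step 3: unroll the error recursion.} Write $e_t\doteq\|Q_t^{(i)}-Q^\star\|_\infty$. Using $Q^\star=\mathcal T Q^\star$, the triangle inequality, and $\gamma$-contractivity of $\mathcal T$,
\[
e_{t+1}\le\|\mathcal T_{\alpha_t}Q_t^{(i)}-\mathcal T Q_t^{(i)}\|_\infty+\|\mathcal T Q_t^{(i)}-\mathcal T Q^\star\|_\infty\le \gamma e_t+\gamma\log|\actions|\bigl(\alpha_{\min}+\gamma^t\Delta_0/\kappa\bigr).
\]
Unrolling gives
\[
e_t\le\gamma^t e_0+\gamma\alpha_{\min}\log|\actions|\sum_{k=0}^{t-1}\gamma^k+\frac{\gamma\Delta_0\log|\actions|}{\kappa}\sum_{k=0}^{t-1}\gamma^{t-1-k}\gamma^{k}.
\]
The first sum equals $(1-\gamma^t)/(1-\gamma)$, which matches the red term. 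The second sum equals $t\gamma^{t-1}$, so the last term is $\Delta_0\log|\actions|\,t\gamma^t/\kappa$, which matches the blue term exactly.

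The only place care is needed is the bookkeeping of exponents in this convolution sum. For that we use the loose bound $\alpha_t(s)\le\alpha_{\min}+\Delta_t/\kappa$ rather than the max itself, and apply the disagreement bound $\Delta_t\le\gamma^t\Delta_0$ at the index $t$ matching the temperature used in that backup.
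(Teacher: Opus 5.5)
Your proposal is correct and follows the paper's proof essentially step for step. Both arguments use the log-sum-exp sandwich to get $\|\mathcal T_{\alpha_t}Q-\mathcal T Q\|_\infty\le\gamma\log|\actions|\,\|\alpha_t\|_\infty$, then the shared-temperature $\gamma$-contraction giving $\|Q_t^{(1)}-Q_t^{(2)}\|_\infty\le\gamma^t\Delta_0$ and hence $\|\alpha_t\|_\infty\le\alpha_{\min}+\gamma^t\Delta_0/\kappa$, and finally unroll $e_{t+1}\le\gamma e_t+\gamma\log|\actions|(\alpha_{\min}+\gamma^t\Delta_0/\kappa)$ with the same convolution sum $\sum_{k=0}^{t-1}\gamma^{t-1-k}\gamma^k=t\gamma^{t-1}$.
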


\begin{proof}
    Let $\mathcal T$ denote the unregularized Bellman optimality operator and let $e_t^{(i)}\doteq \|Q_t^{(i)}-Q^\star\|_\infty$. We use two standard facts about soft Bellman backups from maximum-entropy value iteration~\citep{ziebart2008maximum, haarnoja2018sac,levine2018reinforcement}. First, for any fixed positive temperature $\alpha$, $\mathcal T_\alpha$ is a $\gamma$-contraction in $Q$. Second, the soft value is a one-sided approximation to the hard maximum:
    \begin{equation}
    0
    \le
    (\mathcal T_\alpha Q)(s,a)-(\mathcal T Q)(s,a)
    \le
    \gamma\log|\actions|\,
    \mathbb E_{s'\sim P(\cdot\mid s,a)}[\alpha(s')] .
    \end{equation}
    Hence,
    \begin{equation}
    \|\mathcal T_{\alpha_t}Q-\mathcal T Q\|_\infty
    \le
    \gamma\log|\actions|\,\|\alpha_t\|_\infty .
    \end{equation}
    
    The schedule-specific step is to bound $\|\alpha_t\|_\infty$. Since both runs use the same $\alpha_t$, the standard contraction argument applies conditionally on $\alpha_t$:
    \begin{equation}
    \|Q_{t+1}^{(1)}-Q_{t+1}^{(2)}\|_\infty
    =
    \|\mathcal T_{\alpha_t}Q_t^{(1)}-\mathcal T_{\alpha_t}Q_t^{(2)}\|_\infty
    \le
    \gamma\|Q_t^{(1)}-Q_t^{(2)}\|_\infty .
    \end{equation}
    Iterating gives
    \begin{equation}
    \|Q_t^{(1)}-Q_t^{(2)}\|_\infty
    \le
    \gamma^t\Delta_0 .
    \end{equation}
    Therefore, by the disagreement-scaled definition of $\alpha_t$,
    \begin{equation}
    \|\alpha_t\|_\infty
    \le
    \alpha_{\min}
    +
    \frac{\|Q_t^{(1)}-Q_t^{(2)}\|_\infty}{\kappa}
    \le
    \alpha_{\min}
    +
    \frac{\gamma^t\Delta_0}{\kappa}.
    \end{equation}
    
    Now compare each iterate to $Q^\star$. Since $\mathcal TQ^\star=Q^\star$ and $\mathcal T$ is a $\gamma$-contraction,
    \begin{equation}
    \begin{aligned}
    e_{t+1}^{(i)}
    &=
    \|\mathcal T_{\alpha_t}Q_t^{(i)}-\mathcal TQ^\star\|_\infty \\
    &\le
    \|\mathcal T Q_t^{(i)}-\mathcal TQ^\star\|_\infty
    +
    \|\mathcal T_{\alpha_t}Q_t^{(i)}-\mathcal TQ_t^{(i)}\|_\infty \\
    &\le
    \gamma e_t^{(i)}
    +
    \gamma\log|\actions|
    \left(
    \alpha_{\min}
    +
    \frac{\gamma^t\Delta_0}{\kappa}
    \right).
    \end{aligned}
    \end{equation}
    Unrolling this approximate-contraction recursion gives
    \begin{equation}
    \begin{aligned}
    e_t^{(i)}
    &\le
    \gamma^t e_0^{(i)}
    +
    \gamma\alpha_{\min}\log|\actions|
    \sum_{j=0}^{t-1}\gamma^{t-1-j}
    +
    \frac{\gamma \Delta_0\log|\actions|}{\kappa}
    \sum_{j=0}^{t-1}\gamma^{t-1-j}\gamma^j \\
    &=
    \gamma^t e_0^{(i)}
    +
    \frac{\gamma\alpha_{\min}\log|\actions|}{1-\gamma}(1-\gamma^t)
    +
    \frac{\Delta_0\log|\actions|}{\kappa}\,t\gamma^t .
    \end{aligned}
    \end{equation}
    Substituting back $e_t^{(i)}=\|Q_t^{(i)}-Q^\star\|_\infty$ proves the stated bound.
\end{proof}

\section{Hyperparameters}
\label{app:hparams}

All experiments use the $18$ \texttt{dm\_control} tasks listed in Table~\ref{tab:dmc_tasks}. 
We use the same task set, training budget, evaluation protocol, and random seeds for SAC-LN, MAD-SAC, and their QED variants. 
For our implementation, we use the original MAD-TD codebase. The only three hyperparameters that we change are: proportion real to $0.5$ (for MAD-SAC), no actor layer normalization, and gradient clipping set to $20$. 
The shared experimental settings are summarized in Table~\ref{tab:exp_settings}. 
All environments use frame skip $2$ and discount factor $\gamma=0.99$. 
Each method is trained with $10$ independent random seeds for $2$M environment steps. 
Final performance is computed from $20$ evaluation rollouts per trained policy, and aggregate results are reported using IQM with $95\%$ bootstrap confidence intervals.

\begin{table}[h]
    \centering
    \small
    \caption{The $18$ \texttt{dm\_control} tasks used in the main experiments.}
    \label{tab:dmc_tasks}
    \begin{tabular}{llll}
        \toprule
        \multicolumn{4}{c}{\textbf{Tasks}} \\
        \midrule
        \texttt{acrobot\_swingup} &
        \texttt{cartpole\_balance} &
        \texttt{cartpole\_swingup} &
        \texttt{cheetah\_run} \\
        \texttt{finger\_spin} &
        \texttt{finger\_turn\_easy} &
        \texttt{finger\_turn\_hard} &
        \texttt{fish\_swim} \\
        \texttt{hopper\_hop} &
        \texttt{hopper\_stand} &
        \texttt{pendulum\_swingup} &
        \texttt{quadruped\_run} \\
        \texttt{quadruped\_walk} &
        \texttt{reacher\_easy} &
        \texttt{reacher\_hard} &
        \texttt{walker\_run} \\
        \texttt{walker\_stand} &
        \texttt{walker\_walk} &
        & \\
        \bottomrule
    \end{tabular}
\end{table}

\begin{table}[h]
    \centering
    \small
    \caption{Core experimental settings used across the main \texttt{dm\_control} experiments.}
    \label{tab:exp_settings}
    \begin{tabular}{ll}
        \toprule
        Setting & Value \\
        \midrule
        Number of tasks & $18$ \\
        Number of seeds & $10$ \\
        Training budget & $2$M environment steps \\
        Frame skip & $2$ \\
        Discount factor & $\gamma=0.99$ \\
        Final evaluation & $20$ rollouts per trained policy \\
        Aggregate return metric & IQM with $95\%$ bootstrap CI \\
        Policy-divergence metric & Pairwise symmetric pre-tanh KL \\
        Rollout behavior metric & Cumulative pairwise $\ell_2$ action distance \\
        Action-distance rollout length & $100$ steps \\
        \bottomrule
    \end{tabular}
\end{table}

For the baseline methods, we evaluate SAC-LN and MAD-SAC with standard target-entropy tuning. 
SAC-LN uses a SAC-style actor-critic with double critics and layer normalization. 
MAD-SAC uses the same maximum-entropy actor-critic structure, but augments critic training with model-generated data following the MAD-TD family of methods. 
For computational simplicity, MAD-SAC is evaluated without MPC at action-selection time.

For QED variants, we replace the scalar entropy temperature with the state-dependent disagreement-scaled temperature described in Section~\ref{sec:empirical_de}. 
The target-entropy temperature $\alpha_{\rm TE}$ is retained as a learned temperature floor, so QED reduces to ordinary target-entropy tuning when the disagreement term is inactive. 
We compute disagreement using $N=8$ actions sampled from the current policy and aggregate the double-critic differences with expectile level $\tau=0.9$. 
The disagreement is normalized by the action dimension, and the maximum temperature is clipped at $\alpha_{\max}=0.2$.

\section{Compute resources}
\label{app:compute}

All experiments were run on a shared academic GPU cluster. 
Each training run used a single GPU and 8 CPU cores, with 80G of system memory requested per job. 
The main $\texttt{dm\_control}$ experiments consist of $18$ tasks, $10$ random seeds, and the method variants listed with all seeds run simultaneously on the same GPU and trained for $2$M environment steps. 
Wall-clock time varied by task and method, with typical runs taking approximately \texttt{8}--\texttt{48} hours. 
The total reported experimental suite required approximately 4,000 GPU-hours, excluding preliminary debugging and failed runs.

For reproducibility, the reported experiments use publicly available simulated environments from $\texttt{dm\_control}$ and do not require specialized hardware beyond standard CUDA-enabled GPUs. 
The QED variants add only a small amount of computation relative to their corresponding base algorithms: at each actor update, QED samples $N=8$ actions from the current policy and evaluates both critics to compute the expectile disagreement statistic. 
This overhead is minor compared with the cost of the usual actor--critic updates and model-augmented critic training in MAD-SAC.

\section{Additional experiments} \label{app:add_exps}

QED uses disagreement between two critics in a single training run as a practical proxy for the disagreement that would be observed between independently trained runs. In particular, we argue that if the disagreement between the two critics is representative of cross-run disagreement initially, the distribution from which we sample our data stays continuously close as we ensure our policies stay close. We evaluate this hypothesis directly by measuring whether early within-run critic disagreement predicts cross-seed $Q$-function disagreement across our \texttt{dm\_control} 18 task suite.

For each task, we take the MAD-SAC runs (10 seeds) trained with standard target-entropy tuning and evaluate the critics over the early-training interval (first 15 rollouts). For each rollout, we compute the mean difference between the double critic over a sampled batch and compare it to the pairwise difference in mean Q-values of the first Q-function across all seeds. Figure~\ref{fig:critic-proxy} shows a strong positive relationship between the within-run double-critic disagreement and the cross-seed $Q$-function disagreement across tasks. This suggests that tasks where the two critics disagree (early in training) strongly correlates ($R^2=0.77$) with evaluation $Q_1$ dispersion after training, which supports our use of in-run $Q$-function disagreement as a practical proxy for cross-seed variability. 

\begin{figure}[H]
    \centering
    \includegraphics[width=0.5\linewidth]{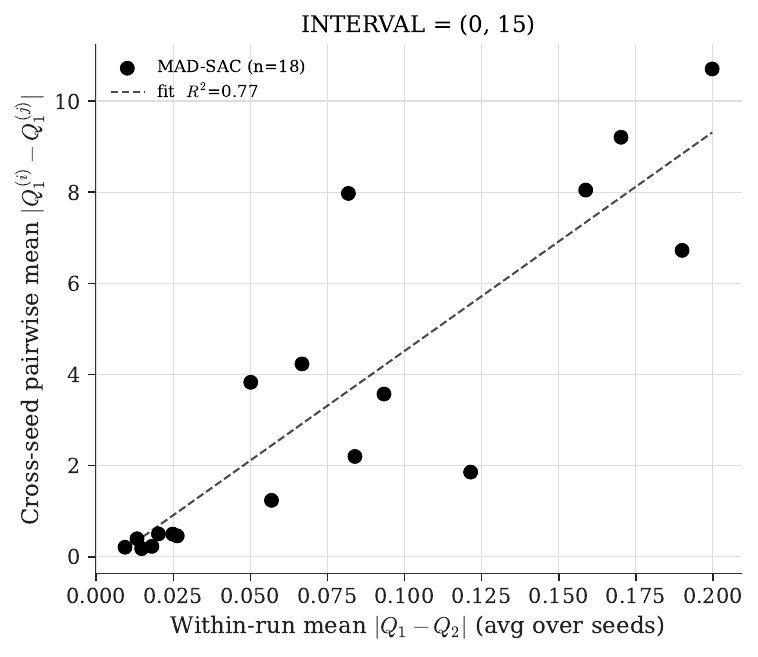}
    \caption{Early double-critic disagreement predicts cross-seed $Q$-function disagreement.}
    \label{fig:critic-proxy}
\end{figure}

\section{Policy-divergence evaluation protocol}
\label{app:kl_protocol}

Definition~\ref{def:behavior-consistent} defines inter-run variability \(\mathcal V\) using a generic divergence \(D\) between policies evaluated under a fixed protocol. In our experiments, we instantiate \(D\) as the symmetric KL divergence between the pre-tanh Gaussian action distributions produced by the actor.

\paragraph{On-policy inter-run evaluation.}
For each task, method, and checkpoint, we construct an evaluation state set
\(\mathcal S_{\mathrm{eval}}^{m,t}\) from evaluation rollouts generated by the \(10\) independently trained seeds of method \(m\) at checkpoint \(t\). 
We then evaluate all pairwise policy divergences among those \(10\) seeds on this shared within-method state set. 
For a fixed task, method, and checkpoint, every policy pair is compared on the same collection of states, and the resulting quantity measures variability across independent executions of the same learning algorithm.

This protocol is intentionally on-policy. 
Behavior-consistency concerns the stability of the behaviors that a learning algorithm actually produces under its own training and evaluation distribution. 
Evaluating each method on the states induced by its own independently trained policies measures the realized behavioral variability of that method: whether different random seeds lead to different action distributions in the regions of the state space that the method reaches at evaluation time.

Given two policies \(\pi_i\) and \(\pi_j\) from method \(m\), let their
pre-tanh Gaussian distributions at state \(s\) be
\[
    \tilde{\pi}_i(\cdot\mid s)
    =
    \mathcal N(\mu_i(s), \operatorname{diag}(\sigma_i^2(s))),
    \qquad
    \tilde{\pi}_j(\cdot\mid s)
    =
    \mathcal N(\mu_j(s), \operatorname{diag}(\sigma_j^2(s))).
\]
We define the pairwise policy divergence as
\[
    D_m(\pi_i,\pi_j)
    =
    \frac{1}{|\mathcal S_{\mathrm{eval}}^{m,t}|}
    \sum_{s\in\mathcal S_{\mathrm{eval}}^{m,t}}
    \frac{1}{2}
    \left[
    D_{\mathrm{KL}}\bigl(\tilde{\pi}_i(\cdot\mid s)\,\|\,\tilde{\pi}_j(\cdot\mid s)\bigr)
    +
    D_{\mathrm{KL}}\bigl(\tilde{\pi}_j(\cdot\mid s)\,\|\,\tilde{\pi}_i(\cdot\mid s)\bigr)
    \right].
\]
The inter-run variability of method \(m\) at checkpoint \(t\) is then
\[
    \mathcal V_m(t)
    =
    \frac{1}{I(I-1)}
    \sum_{i\neq j}
    D_m(\pi_i,\pi_j),
\]
where \(I=10\) is the number of independent training seeds.

\paragraph{Why evaluate on each method's induced states?}
A fixed exogenous state set can be useful for pointwise policy comparison, but it does not directly measure the behavioral variability realized by an RL algorithm at deployment time. 
In continuous-control tasks, the states visited by a policy are part of the learned behavior: different gaits, balance strategies, or recovery maneuvers induce different state distributions. 
Since our goal is to measure whether independent runs of the same method produce consistent behaviors, we evaluate each method on the states reached by that method's own evaluation rollouts. 
This gives a method-level behavioral variability estimate that includes the regions of the state space actually occupied by the learned policies.

This choice also avoids evaluating policies primarily on states that are irrelevant or rarely visited under that method's learned behavior. 
For example, a policy may be highly variable on off-distribution states but consistent on the trajectory manifold it actually follows; conversely, a method may produce different trajectory manifolds across seeds, which will be reflected in the states collected from its own rollouts and in the resulting inter-run KL. 
The
metric therefore captures realized behavioral consistency rather than pointwise discrepancy detached from the method's induced occupancy measure.

\paragraph{Cross-method interpretation.}
When comparing methods, \(\mathcal V_m(t)\) should be interpreted as each method's on-policy inter-run variability: the average pairwise divergence among independent runs of that method on the states induced by that method. 
It does not require all methods to be evaluated on an identical exogenous state set, because the object of comparison is the variability of each learning procedure under its own induced evaluation distribution.

\paragraph{Relation to the Boltzmann-policy analysis.}
Theorem~\ref{thm:pairwise-kl} and our discussion of symmetric KL show that, for exact Boltzmann policy improvement in finite action spaces, disagreement-scaled temperature controls both directed and symmetric KL between the induced policies.
Our continuous-control experiments use tanh-squashed Gaussian actors, so the empirical divergence is computed between the actual actor distributions used by SAC-LN and MAD-SAC. 
We compute this KL in the pre-tanh Gaussian space for numerical stability.
Thus, the empirical metric is the natural continuous-control analogue of the symmetrized policy divergence controlled in the idealized Boltzmann setting: both measure how sensitive the policy-improvement step is to run-specific value estimates, but they are instantiated for different policy classes.

\section{Full experimental results} \label{app:full_exp}

We report the per-task reward and policy-divergence curves underlying the aggregate results in Section~\ref{sec:experiments}. 
Figures~\ref{fig:reward_grid_sac} and~\ref{fig:reward_grid_madsac} show how QED affects learning performance across all $18$ \texttt{dm\_control} tasks for SAC-LN and MAD-SAC, respectively. Figures~\ref{fig:kl_grid_sac} and~\ref{fig:kl_grid_madsac} show the corresponding inter-run policy-divergence curves.

\begin{figure}[p]
    \centering
    \includegraphics[
        width=\linewidth,
        height=0.802\textheight,
        keepaspectratio
    ]{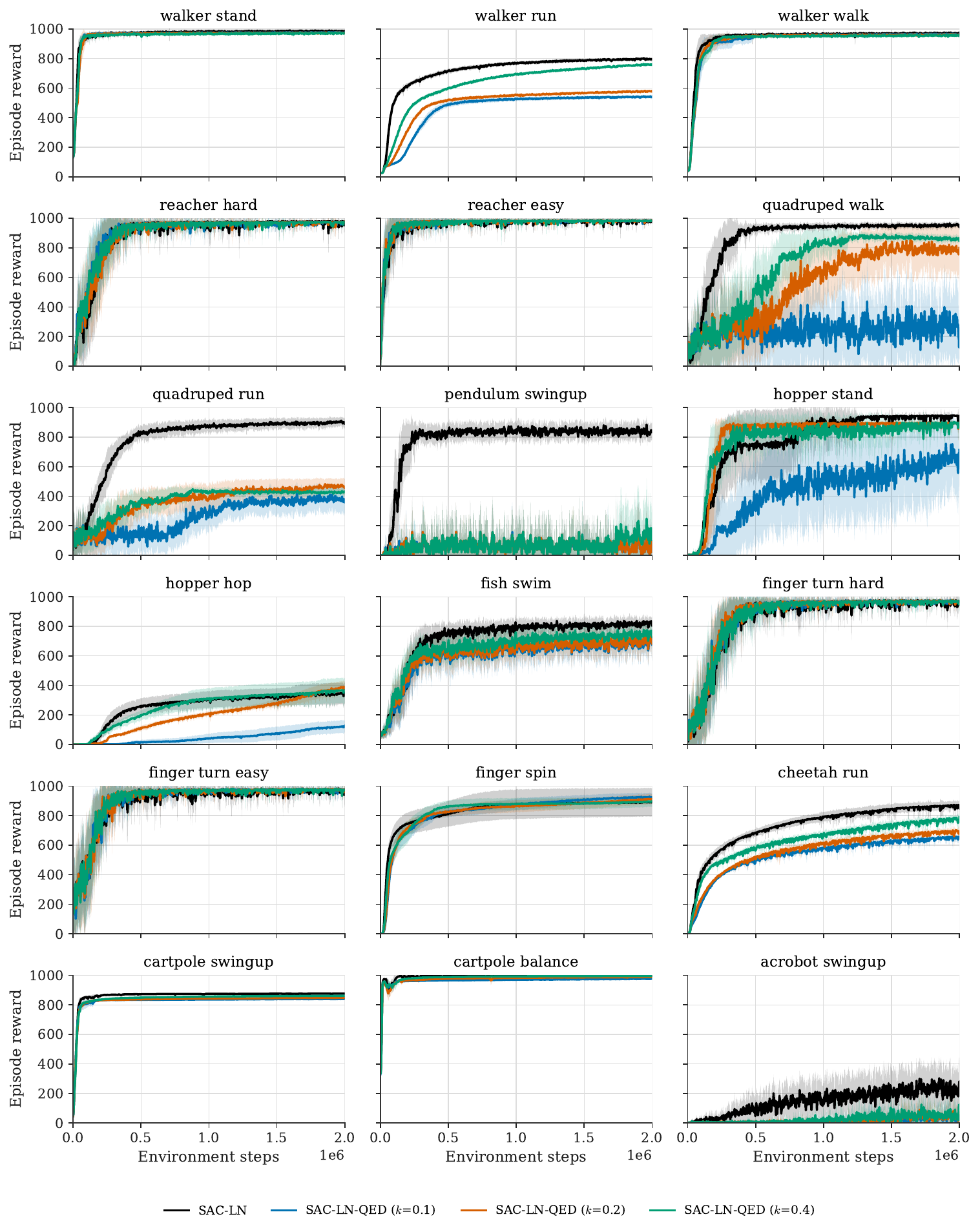}
    \caption{\textbf{Per-task learning curves for SAC-LN and SAC-QED on the 18-task \texttt{dm\_control} suite.}
    Episode reward over environment steps for the SAC-LN baseline and QED variants with $k\in\{0.1,0.2,0.4\}$, matching the SAC-LN conditions used in the aggregate results in Figure~\ref{fig:dmc_returns}.
    QED generally preserves strong performance on easier tasks while introducing a performance-consistency trade-off on harder locomotion tasks, especially when the behavioral-consistency constraint is strongest.}
    \label{fig:reward_grid_sac}
\end{figure}

\begin{figure}[p]
    \centering
    \includegraphics[
        width=\linewidth,
        height=0.82\textheight,
        keepaspectratio
    ]{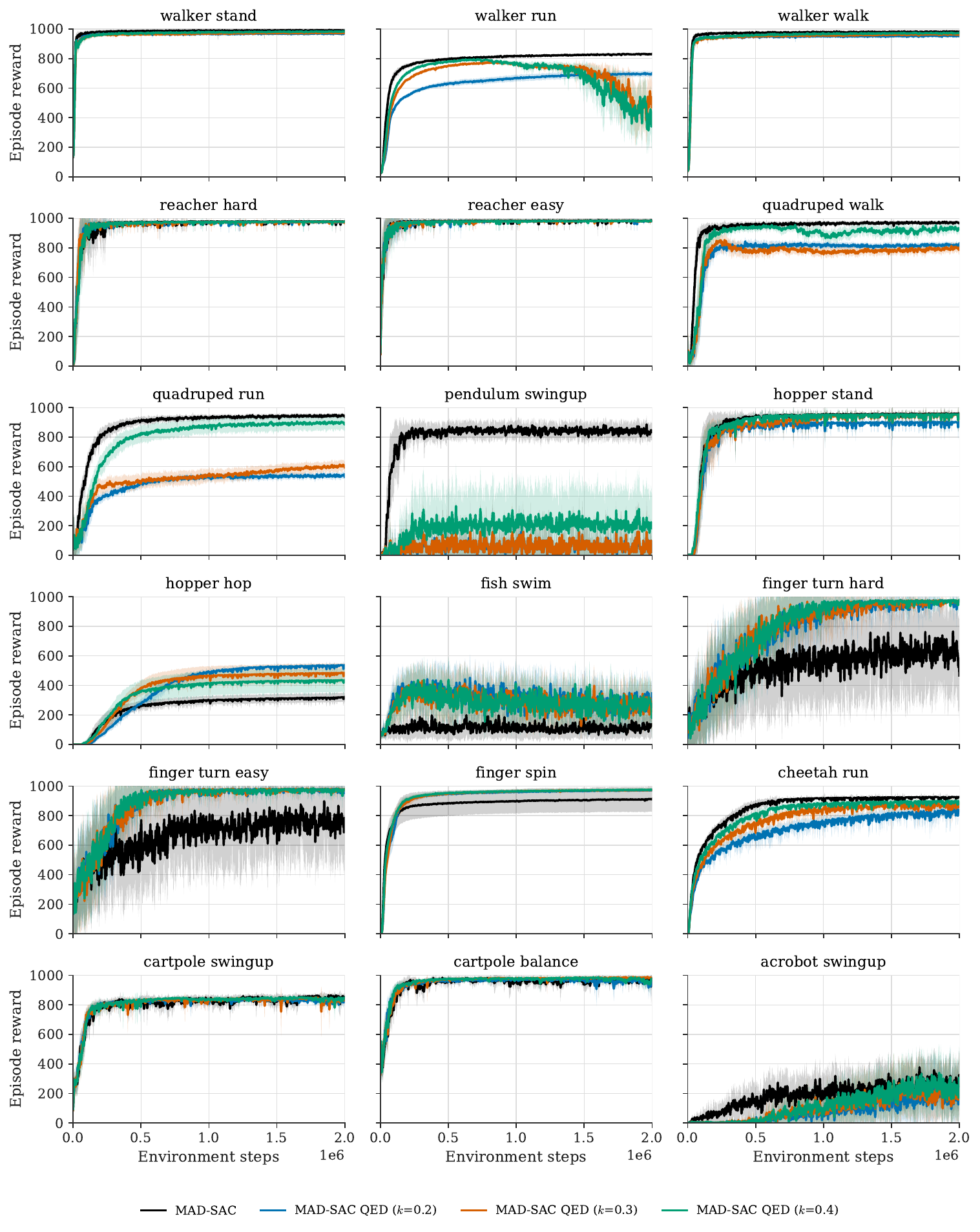}
    \caption{\textbf{Per-task learning curves for MAD-SAC and MAD-SAC-QED on the 18-task \texttt{dm\_control} suite.}
    Episode reward over environment steps for the MAD-SAC baseline and QED variants with $k\in\{0.2,0.3,0.4\}$, matching the MAD-SAC conditions used in the aggregate results in Figure~\ref{fig:dmc_returns}.
    Compared with SAC-LN, MAD-SAC is more robust to the additional entropy induced by QED, and QED often preserves or improves learning while reducing seed-level dispersion.}
    \label{fig:reward_grid_madsac}
\end{figure}

\begin{figure}[p]
    \centering
    \includegraphics[
        width=\linewidth,
        height=0.82\textheight,
        keepaspectratio
    ]{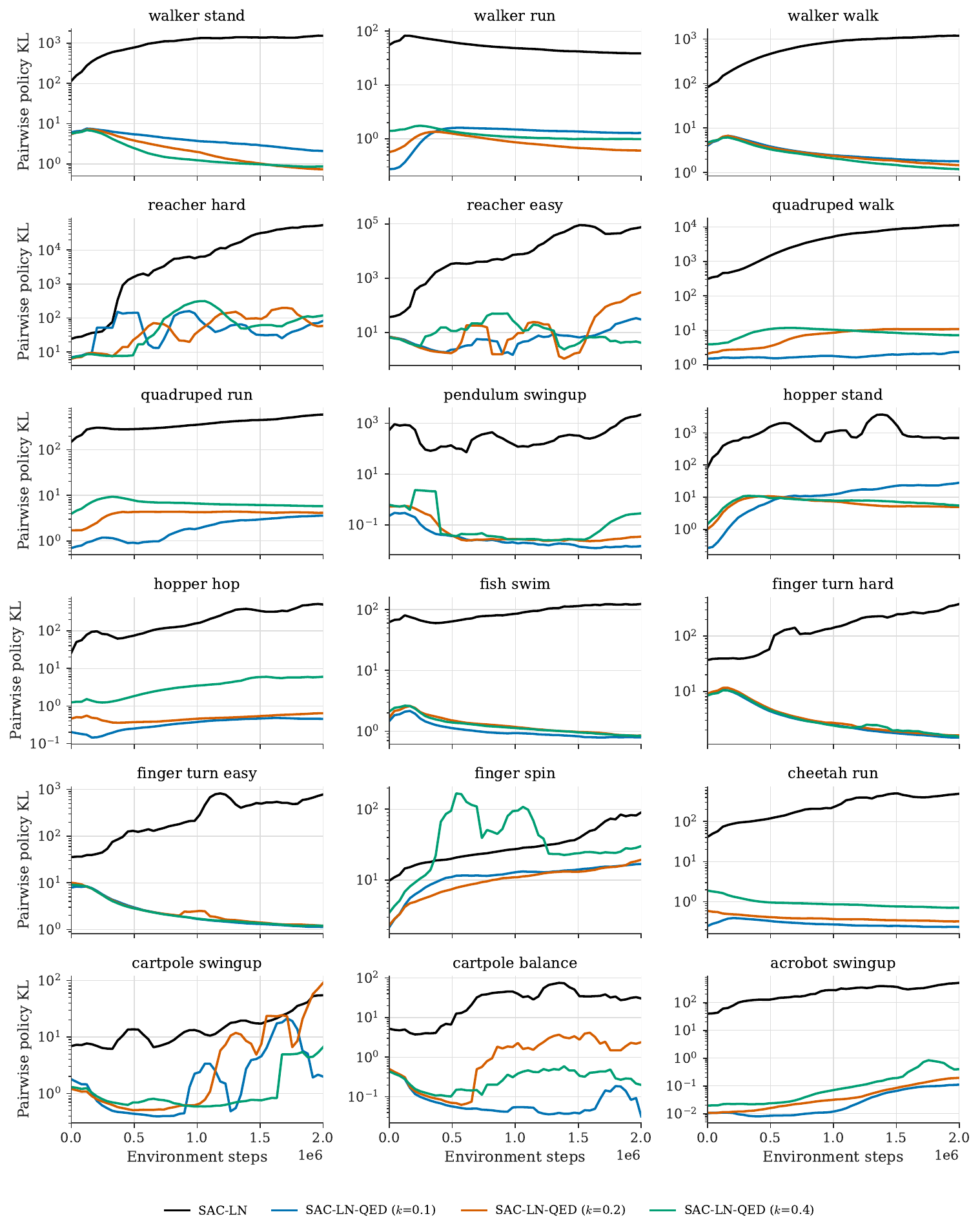}
    \caption{\textbf{Per-task inter-run policy divergence for SAC-LN and SAC-QED on the 18-task \texttt{dm\_control} suite.}
    Pairwise symmetric pre-tanh policy KL over environment steps for the SAC-LN baseline and QED variants with $k\in\{0.1,0.2,0.4\}$, matching the SAC-LN conditions used in the aggregate results in Figure~\ref{fig:dmc_returns}.
    Across most tasks, QED substantially lowers inter-run policy divergence relative to standard target-entropy tuning, with stronger behavioral-consistency constraints producing lower KL but sometimes larger performance costs.}
    \label{fig:kl_grid_sac}
\end{figure}

\begin{figure}[p]
    \centering
    \includegraphics[
        width=\linewidth,
        height=0.82\textheight,
        keepaspectratio
    ]{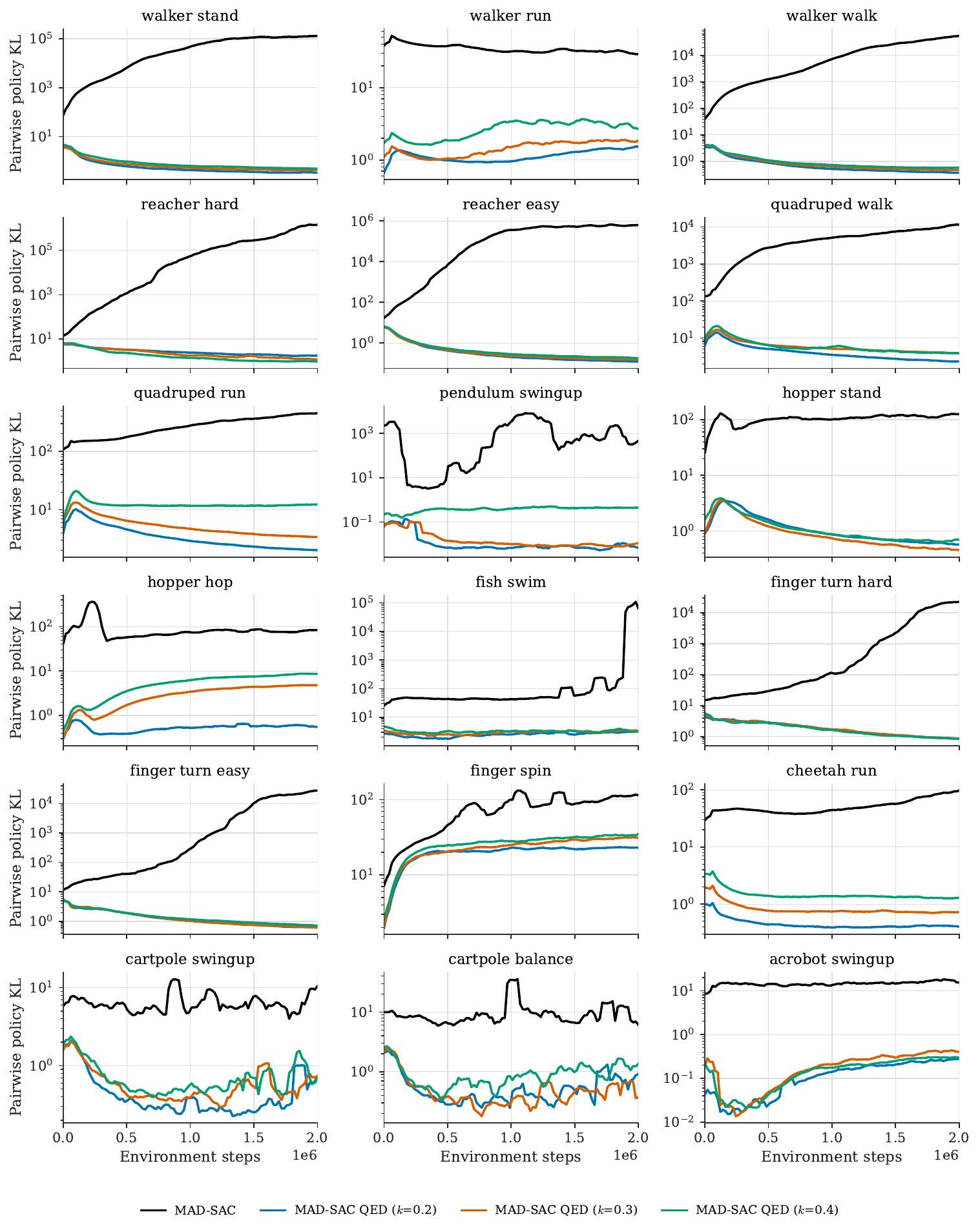}
    \caption{\textbf{Per-task inter-run policy divergence for MAD-SAC and MAD-SAC-QED on the 18-task \texttt{dm\_control} suite.}
    Pairwise symmetric pre-tanh policy KL over environment steps for the MAD-SAC baseline and QED variants with $k\in\{0.2,0.3,0.4\}$, matching the MAD-SAC conditions used in the aggregate results in Figure~\ref{fig:dmc_returns}.
    QED consistently suppresses the growth of cross-seed policy divergence, showing that the behavioral-consistency effect holds across tasks and is not driven only by the aggregate results.}
    \label{fig:kl_grid_madsac}
\end{figure}

\end{document}